
\documentclass[10pt,twocolumn,letterpaper]{article}

\usepackage{cvpr}              
\definecolor{cvprblue}{rgb}{0.21,0.49,0.74}
\usepackage[pagebackref,breaklinks,colorlinks,allcolors=cvprblue]{hyperref}

\usepackage{comment}
\usepackage{amsthm}
\newtheorem{definition}{Definition}
\newtheorem{theorem}{Theorem}
\newtheorem{proposition}{Proposition}
\newtheorem{lemma}{Lemma}

\usepackage{booktabs, multirow} 
\usepackage{soul}
\usepackage{xcolor,colortbl} 
\usepackage{changepage,threeparttable} 
\usepackage{docmute} 
\usepackage{float} 

\input{alldefin}

\newcommand{\vect}[1]{\mathbf{#1}}
\newcommand{\set}[1]{\mathcal{#1}}
\newcommand{\subspace}[1]{V_{#1}}
\newcommand{\basis}[1]{\set{E}_{#1}}
\newcommand{\proj}[1]{\mathbb{P}_{#1}}
\newcommand{\norm}[1]{\left\lVert #1 \right\rVert}
\newcommand{\ip}[2]{\left\langle #1, #2 \right\rangle}
\newcommand{\tree}{\set{T}}
\newcommand{\nodes}{\set{V}}

\newcommand{\distT}[2]{D_{\tree}(#1, #2)}

\newcommand{\parent}[1]{\text{parent}(#1)}


\title{Hier-COS: Making Deep Features Hierarchy-aware via\\Composition of Orthogonal Subspaces}

\author{Depanshu Sani \qquad \qquad \qquad Saket Anand\\
Indraprastha Institute of Information Technology, Delhi, India \\
{\tt\small \url{https://sites.google.com/iiitd.ac.in/hier-cos}}
}

\begin{document}
\maketitle
\begin{abstract}

Traditional classifiers treat all class labels as mutually independent, thereby considering all negative classes to be equally incorrect. This approach fails severely in many real-world scenarios, where a known semantic hierarchy defines a partial order of preferences over negative classes. While hierarchy-aware feature representations have shown promise in mitigating this problem, their performance is typically assessed using metrics like Mistake Severity (MS) and Average Hierarchical Distance (AHD). In this paper, we highlight important shortcomings in existing hierarchical evaluation metrics, demonstrating that they are often incapable of measuring true hierarchical performance. Our analysis reveals that existing methods learn sub-optimal hierarchical representations, despite competitive MS and AHD scores. To counter these issues, we introduce Hierarchical Composition of Orthogonal Subspaces (Hier-COS), a novel framework for unified `hierarchy-aware fine-grained' and `hierarchical multi-level' classification. We show that Hier-COS is theoretically guaranteed to be consistent with the given hierarchy tree. Furthermore, our framework implicitly adapts the learning capacity for different classes based on their position within the hierarchy tree --- a vital property absent in existing methods. Finally, to address the limitations of evaluation metrics, we propose Hierarchically Ordered Preference Score (HOPS), a ranking-based metric that demonstrably overcomes the deficiencies of current evaluation standards. We benchmark Hier-COS on four challenging datasets, including the deep and imbalanced tieredImageNet-H (12-level) and iNaturalist-19 (7-level). Through extensive experiments, we demonstrate that Hier-COS achieves state-of-the-art performance across all hierarchical metrics for every dataset, while simultaneously beating the top-1 accuracy in all but one case. Lastly, we show that Hier-COS can effectively learn to transform the frozen features extracted from a pretrained backbone (ViT) to be hierarchy-aware, yielding substantial benefits for hierarchical classification performance.

\end{abstract}

\section{Introduction}\label{sec:intro}

\begin{figure*}[t!]
    \centering
    \begin{subfigure}[b]{0.245\linewidth}
        \centering
        \includegraphics[width=\linewidth]{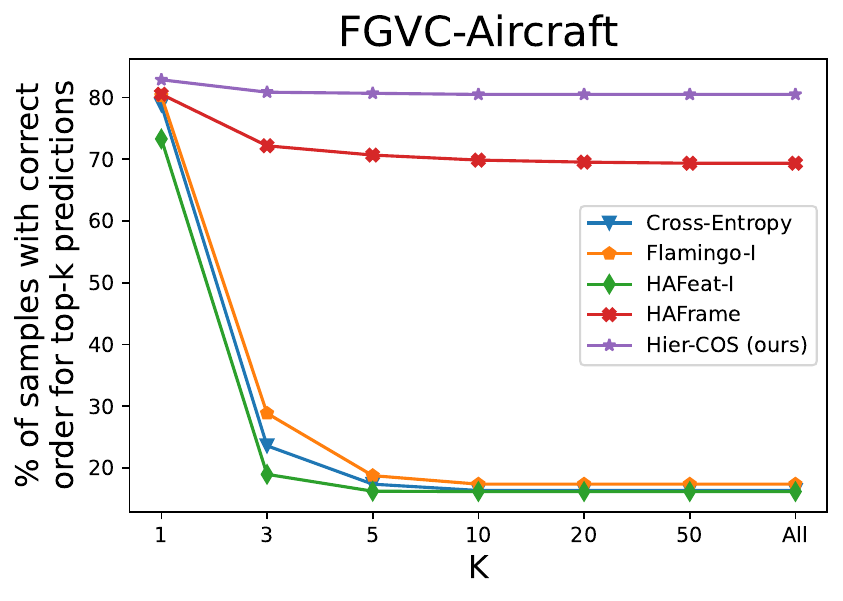}
        \caption{}
    \end{subfigure}
    \begin{subfigure}[b]{0.245\linewidth}
        \centering
        \includegraphics[width=\linewidth]{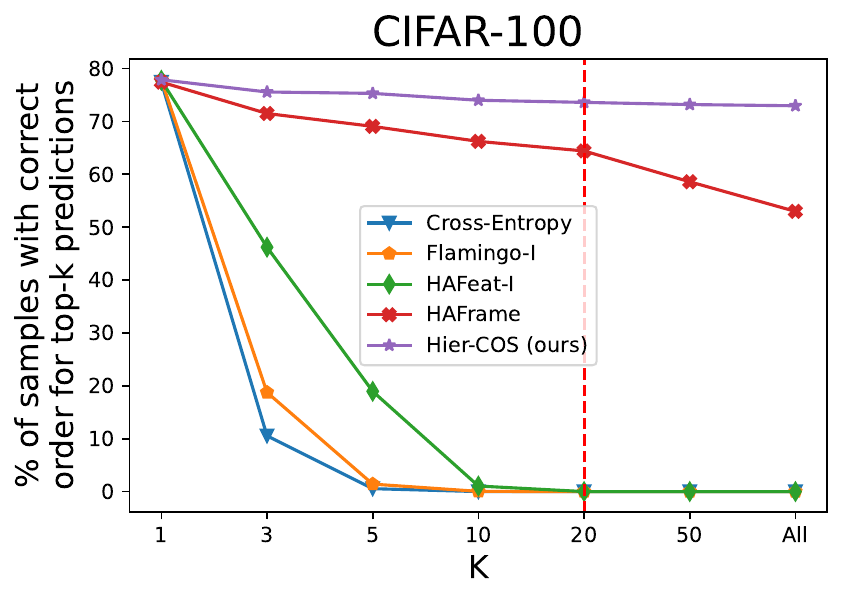}
        \caption{}
    \end{subfigure}
    \begin{subfigure}[b]{0.245\linewidth}
        \centering
        \includegraphics[width=\linewidth]{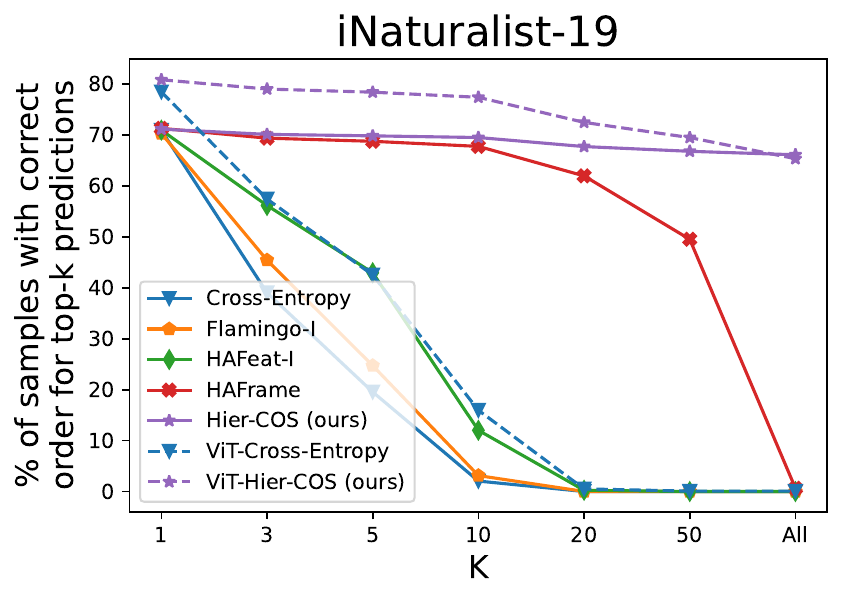}
        \caption{}
    \end{subfigure}
    \begin{subfigure}[b]{0.245\linewidth}
        \centering
        \includegraphics[width=\linewidth]{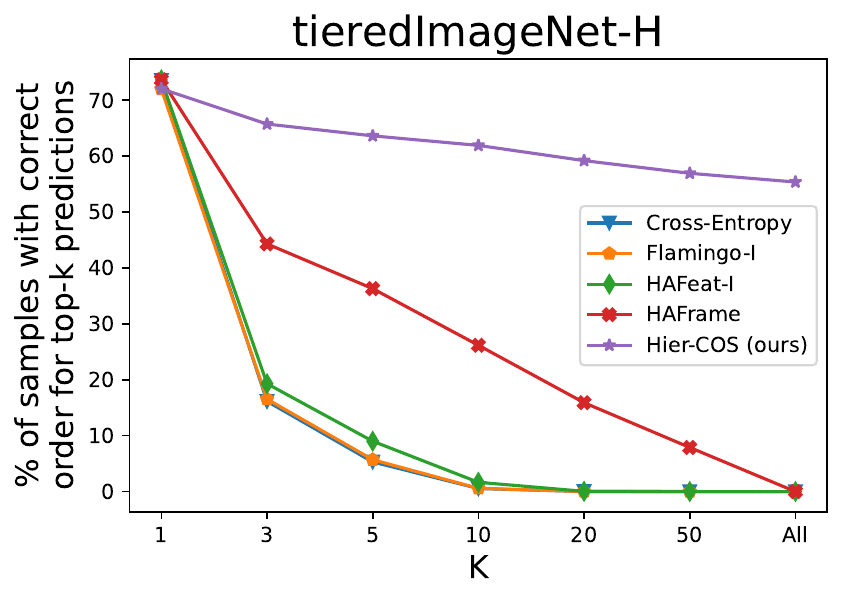}
        \caption{}
    \end{subfigure}
    \begin{subfigure}[b]{\linewidth}
        \centering
        \includegraphics[width=0.96\linewidth]{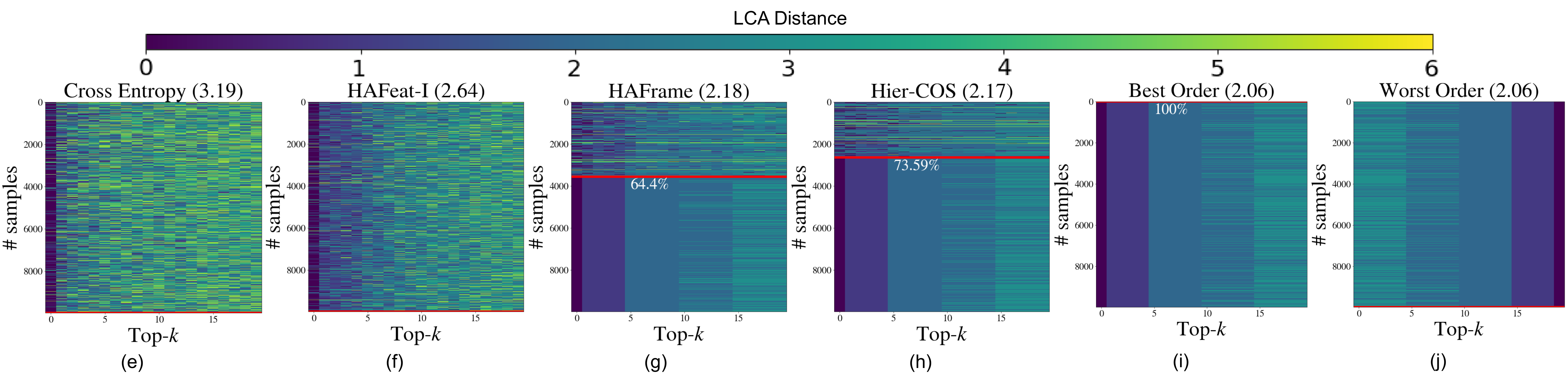}
    \end{subfigure}
    \caption{(Best viewed in color) Hierarchy-aware feature representations are evaluated based on metrics like MS and AHD@$k$ paired with top-1 or top-$k$ accuracy. Metrics like AHD@$k$ are a \textit{permutation invariant} statistic and therefore inadequate for evaluating the top-$k$ predictions, which are rank-ordered. By comparing the top-$k$ prediction order with their \emph{LCA-based} partial preference order, we point out that existing methods inadequately capture the label hierarchy structure. Across four datasets, subfigures (a)–-(d) show that the percentage of samples with a correct top-$k$ prediction order decreases significantly as $k$ increases, especially for complex datasets with deep hierarchies. 
    In subfigures (e)--(j), we use the CIFAR-100 test samples to compare the performance of four methods, including Hier-COS, by visualizing the absolute LCA distance. Each row of the image corresponds to a test sample and the columns record the LCA distance between the top-$k$ prediction and the true class. Given the class hierarchy, subfigure (i) shows the best achievable top-$20$ prediction order, with zeros in the first column indicating a 100\% top-1 accuracy. Analogously, (j) is generated by reversing the ordering for each row to generate the `worst' top-$20$ prediction order with 0\% top-$1$ accuracy. Because of permutation invariance operation within AHD@$20$, the metric obtained for the best (i) and worst (j) orderings are identical at 2.06. This extreme case notwithstanding, we make similar observations in (e)--(h), where (e) and (f) have nearly zero samples with correct prediction order and yet AHD@$20$ decreases substantially, and only moderately for (g) despite a substantially large percentage ($\sim 64.4\%$) of samples with correct top-$20$ predictions. With (h) displaying an even higher percentage ($\sim 73.5\%$), the AHD@$20$ scores drop only by a marginal value of 0.01. This analysis points out the inability of AHD@$k$ to evaluate hierarchy-aware representations. Further exacerbating the interpretability of AHD@$k$ is its absolute value which depends on the properties of the hierarchy tree (skewness, branching factor and height), as will be seen for tieredImageNet-H.    
    }
    \label{fig:teaser}
\end{figure*}

Deep neural networks have demonstrated remarkable scalability in successfully handling thousands or even millions of classes for image classification \cite{Ridnik_ImageNet21K_pretrain,An_ICCVw21_MillionClasses}.
However, most classification algorithms operate under the assumption that all classes are mutually unrelated.
In reality, classes often exhibit structured semantic relationships \cite{krizhevsky_2009_CIFAR100, ren_tiered_18_ICLR, bai2020products10klargescaleproductrecognition}, such as taxonomies defining \texttt{is-a}, \texttt{part-of} or \texttt{type-of} relationships \cite{Ridnik_ImageNet21K_pretrain,van_inat_2018_CVPR}. 
This semantic knowledge is often available and can be used to define a partial order for the target, representing a preference order over the negative classes. This partial preference order depicts the semantic similarity between the target and the negative classes, thereby quantifying the severity of mistaking the target as a negative class. However, this valuable structure, or the partial order, remains underutilized during training. Consequently, despite impressive performance, the models trained using such algorithms can make mistakes that are highly severe.

Learning hierarchy-aware feature representations has been applied to mitigate the severity of prediction errors \cite{frome_2013_NIPS, wu_2016_ACM, barz_2019_WACV, flamingo_2021_Chang, baseline-hafeat}. Such representations ensure that classes with greater semantic similarity, as defined by a given taxonomy, are closer in the feature space, thus enabling more meaningful predictions even in the event of error. 
Such a feature space also enables hierarchical multi-level classification, i.e., predicting class labels across all the hierarchical levels \cite{flamingo_2021_Chang, baseline-hafeat}. We argue that strong hierarchy-aware feature representations should: (i) maintain the classification performance, (ii) reduce the severity of mistakes, and (iii) be hierarchically consistent, i.e., the predicted classes at all coarser levels should be ancestors of the predicted leaf class.

To assess the performance of hierarchy-aware feature representation, a multi-metric evaluation system has been widely adopted in the literature. The multi-class classification performance is typically measured using the standard top-1 accuracy metric. To evaluate the consistency in hierarchical multi-level classification, metrics such as Tree-based InConsistency Error rate (TICE) \cite{TICE} and Full Path Accuracy (FPA) \cite{park2024learninghierarchicalsemanticclassification} have been employed. TICE tests whether the prediction path exists in the tree even when the fine-grained prediction is incorrect. On the other hand, FPA measures the proportion of samples correctly classified at \emph{all} hierarchical levels. The severity of mistakes is quantified by comparing the partial preference order, obtained from the hierarchy tree, to the predicted classes, which are usually ranked in order of their prediction probability. Most prominent metrics often quantify the severity using the height of the tree rooted at the Lowest Common Ancestor (LCA) of the predicted and true classes. However, as pointed out by \cite{baseline-hafeat}, recent LCA-based metrics like Mistake Severity (MS) \cite{bertinetto_2020_CVPR}, Average Hierarchical Distance (AHD) and its top-k variant (AHD@$k$) \cite{karthik_2021_ICLR} are hard to interpret and compare across different hierarchy trees. Moreover, these metrics alone are insufficient and need to be paired with the fine-grained top-1 accuracy. The resulting multi-metric evaluation can also conflict, making it difficult to make a single decision about which model performs best. Fig.~\ref{fig:teaser} compares different hierarchical classifiers and highlights crucial limitations in current evaluation metrics. This motivates the need for a new hierarchical evaluation metric.

Existing hierarchy-aware feature learning methods generally follow two paths, (i) \textit{directly} encouraging predictions to align with the partial preference order using specially designed label embeddings \cite{ barz_2019_WACV, liu_2020_CVPR}, loss functions \cite{deng_2014_HEX_ECCV, 10.1007/978-3-031-70359-1_15, bertinetto_2020_CVPR} or architectures \cite{wu_2016_ACM, baseline-hafeat} for hierarchy-aware multi-class classification, and (ii) training separate auxiliary classifiers for each hierarchical level to \textit{indirectly} learn hierarchy-aware features by performing hierarchical multi-level classification \cite{flamingo_2021_Chang, baseline-hafeat}. The indirect approach, however, can lead to hierarchical inconsistency unless additional constraints are imposed, which in turn complicates the loss function. On the other hand, direct approaches cannot perform hierarchical multi-level classification directly.
Regardless of the direct or indirect approach, the general objective of such algorithms is to bring the feature vectors closer/farther apart if they correspond to classes with higher or lower semantic similarity. A major bottleneck shared by all these methods is that the feature representation for every class is usually confined to a one-dimensional space—along the direction of its corresponding weight vector. Consequently, for a large number of fine-grained classes, the angular separation between the semantically similar classes might become very small. Additionally, the hierarchies in the real-world can be highly imbalanced \cite{ren_tiered_18_ICLR} and might also be heavily skewed in terms of its depth and branching factor. Therefore, some fine-grained classes might be more complex to separate than others. For instance, in Fig.~\ref{fig:HOPS-motivation}, classes that share more common ancestors (like $\{D6, \ldots, D10\}$) are inherently harder to distinguish and require greater learning capacity than simpler classes (like $\{A2, A3\}$). Current models fail to provide this adaptive capacity. The ideal solution must be hierarchically consistent and offer class-specific adaptive capacity to effectively separate all the classes.

Building upon the strength of neural networks to learn highly discriminative features, we propose that these unrelated yet discriminative features can be mapped to a vector space that is inherently hierarchy-aware, hierarchically consistent and can automatically adapt the learning capacity for complex classes. To achieve this, we first formally introduce a novel concept of Hierarchy-Aware Vector Spaces (HAVS). We then propose a framework called Hier-COS to construct such an HAVS $V_\mathcal{T}$ using fixed orthonormal bases, each tied to a class in the label hierarchy. This enables flexible subspace compositions ($V_i \subset V_\mathcal{T}$) to capture the diverse features of complex superclasses, while orthogonal bases learn nuanced, discriminative features specific to each class. We demonstrate that standard neural backbones can be augmented to map features into $V_\mathcal{T}$ using a lightweight transformation module, which reduces the severity of mistakes and promotes hierarchical consistency in top-$k$ predictions. While top-1 accuracy is important, we argue that top-$k$ predictions better reflect hierarchy-aware representations, thereby proposing a preference-based metric for a more faithful evaluation. We list our contributions below.

\begin{enumerate}
    \item We define Hierarchy-Aware Vector Spaces (HAVS) that are induced by a given hierarchy tree $\mathcal{T}$ and a tree distance function $D_\mathcal{T}$. We then propose Hierarchical Composition of Orthogonal Subspaces (Hier-COS), which is theoretically guaranteed to be an HAVS, and is therefore implicitly aligned with the structure of the hierarchy tree. 
    \item To the best of our knowledge, this is the first framework allowing unified \textit{`hierarchy-aware multi-class'} and \textit{`hierarchical multi-level'} classification. We demonstrate that a lightweight transformation module can learn the mapping from any vector space to Hier-COS, improving the severity of mistakes and hierarchical consistency.
    \item Hier-COS implicitly adapts the learning capacity for different classes based on their inherent classification complexity --- a vital property absent in existing methods. 
    \item We identify key limitations in hierarchical multi-class classification metrics commonly used in vision, and critique the applicability of ranking-based metrics from the information retrieval literature \cite{ndcg, ir-survey, dezert2024distancespartialpreferenceorderings, mnr-ndcg}. To address these gaps, we introduce Hierarchically Ordered Preference Score (HOPS)—a novel metric grounded in preference ordering \cite{dezert2024distancespartialpreferenceorderings}. It accounts for the top-1 classification performance as well as the severity of mistakes, thus eliminating the need for multi-metric evaluation.
    
    \item We evaluate our method on four hierarchical datasets: FGVC-Aircraft (3 levels) \cite{fgvc-aircraft}, CIFAR-100 (5 levels) \cite{krizhevsky_2009_CIFAR100}, iNaturalist-19 (7 levels) \cite{van_inat_2018_CVPR} and tieredImageNet (12 levels) \cite{ren_tiered_18_ICLR}. We achieve state-of-the-art (SOTA) on three datasets and competitive performance on tieredImageNet-H.
\end{enumerate}

\section{Related Work}\label{sec:rel_work}



Learning representations that use the class hierarchy has been shown to directly correlate with making better mistakes \cite{bertinetto_2020_CVPR, baseline-hafeat} and robustness \cite{review-helus2022robustness}. In recent years, several works have exploited available taxonomy, not only for standard classification tasks \cite{flamingo_2021_Chang, frome_2013_NIPS} but also for image retrieval \cite{review-ramzi2022retrieval, barz_2019_WACV}, open set recognition \cite{review-hosr, review-lang2024osr, review-ancestorsearch-hosr}, object detection \cite{Feng_CSVT2025} and neural collapse \cite{baseline-Liang_2023-haframe}. 

Label embedding methods map ground truth classes to label embeddings in vector spaces \cite{bengio_2010_NIPS, frome_2013_NIPS, barz_2019_WACV, bertinetto_2020_CVPR} or hyperbolic spaces \cite{liu_2020_CVPR}, where the relative positions of the label embeddings represent their semantic similarity. These methods maximize the similarity between the feature vector and fixed label embeddings. Although hyperbolic embeddings \cite{nickel_2018_PMLR, liu_2020_CVPR, khrulkov_2020_CVPR} also capture hierarchical relations via non-Euclidean geometry, they require manifold optimization and have not been demonstrated on deep, fine-grained visual hierarchical classification. 

Hierarchy-aware features can also be learned by specialized loss functions that directly incorporate hierarchical information \cite{deng_2014_HEX_ECCV, bertinetto_2020_CVPR, 10.1007/978-3-031-70359-1_15}. 
Classification models employing architectural adaptations have also been proposed. Wu et al. \cite{wu_2016_ACM} developed a multi-task loss function where independent classification heads are used for classes at different granularity, i.e., levels of the label hierarchy. All the classifiers share a joint feature extractor and the cross-entropy losses of all levels are optimized jointly. Similarly, Chang et al. \cite{flamingo_2021_Chang} also propose to train additional classifiers for each level of granularity, although while disentangling the feature vectors for each level of granularity. This allows the learning of coarse and fine-grained features; however, it is limited to small hierarchies because of explicit partitioning of the feature space. To mitigate this, Garg et al. \cite{baseline-hafeat} utilize the shared feature extractor and use a cross-entropy loss at the finest level and aim to achieve hierarchical consistency by introducing a soft hierarchical consistency, geometric consistency and margin loss. All these methods rely on training additional classifiers and are paired with additional loss terms, making them harder to train, more resource-intensive, and more difficult to scale to larger hierarchies. Moreover, with independent classifiers, additional constraints are needed to maintain hierarchical consistency.

HAFrame \cite{baseline-Liang_2023-haframe} trains a single classifier while freezing the weight vectors to a hierarchy-aware frame --- a frame where semantically closer weight vectors have lower angular separation. Building on neural collapse theory \cite{papyan2020prevalence}, they map deep features to this fixed output frame via a learnable transformation. While HAFrame improves accuracy and metrics for severity of mistakes, it inadequately captures the hierarchy structures as shown in Fig.~\ref{fig:teaser}. 

To the best of our knowledge, there exists no single model that can simultaneously perform \textit{`hierarchy-aware multi-class'} and \textit{`hierarchical multi-level'} classification. Moreover, the learned feature representations are confined to live in the direction of the weight vectors, thereby restricting the inter-class separability due to smaller angular separation. Finally, none of the existing methods can automatically adapt the learning capacity for the classes based on the hierarchy tree.

\section{Hierarchy-Aware Vector Spaces (HAVS)}\label{sec:hier_cos}

\subsection{Problem Setup and Notation} 
We consider a dataset $\calX$ and a set of class labels $\mathcal{Y}$. We are given the semantic relationship between the classes $\mathcal{Y}$ in the form of a hierarchy tree $\calT$ comprising of the set of nodes, $\calV_\calT\! =\!\{v_1, \ldots, v_n\}\cup\{v_0\}$, where $v_0$ denotes the root node representing the universal class and is never used. The tree $\calT$ has height $H$ with the number of nodes at each level given by $\{K_1,\ldots,K_H\}$. We separately define the set of $K$ leaf nodes as $\calV_{\ell}=\{v_{y_1}, \ldots, v_{y_{K}}\}\subset\calV_\calT$ to accommodate for leaf nodes at any height, a property often observed in naturally occurring label hierarchies. Here, the indices $y_1,\ldots,y_K$ are the fine-grained class labels $\mathcal{Y}$. For each node $v_i\in\calV_\calT$, we define the set of its ancestors (excluding $v_0$) as $f_a(v_i)$ and the set of its descendants as $f_d(v_i)$. Note that since each intermediate node in the tree is also a class label (super-category of a fine-grained class), we will interchangeably use the term \textit{node} or \textit{class}.

\begin{definition} \label{def:hafs}
    Consider a vector space $V_\mathcal{H}$ and a hierarchy tree $\mathcal{T}$. Let $\mathbf{x} \in V_\mathcal{H}$ be a non-zero feature vector representing the input $x$ belonging to class $y_i$. Assume $\{V_{y_1}, \ldots, V_{y_K}\}$ to be the subspaces of $V_\mathcal{H}$ such that
    \[\mathbf{x} \in V_{y_j} \iff y_j = y_i\]
    Let's denote the distance of feature vector $\mathbf{x}$ from the subspace $V_{y_i}$ as $D_i$ and the tree distance between classes $y_i$ and $y_j$ as $D_\mathcal{T}(y_i, y_j)$. The vector space $V_\mathcal{H}$ is said to be a \framebox[1.01\width]{\textsc{Hierarchy-Aware Vector Space}} induced by the tree $\mathcal{T}$ and a tree distance function $D_\mathcal{T}$ if and only if the following condition holds:
    \begin{align} \label{eq:hafs}
        \begin{split}
            \text{if} \hspace{15pt} D_\mathcal{T}(y_i, y_j) &< D_\mathcal{T}(y_i, y_k) \\
            \text{then} \hspace{10pt} \vert D_i - D_j \vert &< \vert D_i - D_k \vert \\
            &\hspace{4pt}\forall \hspace{2pt} (y_i, y_j, y_k) \in \mathcal{Y}
        \end{split}
    \end{align}
\end{definition}

Note that $D_\mathcal{T}(y_i, y_j)$ could be any valid tree distance function that computes the distance between any two nodes $y_i$ and $y_j$ in the tree, for instance, LCA-based or hop-based. This tree distance function is used to define the \textit{desired partial preference order} that we want the learned vector space to inherit. On the other hand, the distance $D_i$ between a feature vector $\mathbf{x}$ and a subspace $V_{y_i}$ is always defined as the distance between the vector and its orthogonal projection onto $V_{y_i}$, discussed later in this section. Since the LCA-based tree distance function provides a more nuanced measure of semantic similarity by considering the shared ancestry of the classes, it has been widely adopted to define the desired partial preference order over classes \cite{barz_2019_WACV, bertinetto_2020_CVPR, flamingo_2021_Chang, baseline-hafeat, baseline-Liang_2023-haframe, park2024learninghierarchicalsemanticclassification}.

\subsection{Hierarchical Composition of Orthogonal Subspaces (Hier-COS)}
In this section, we focus on defining an HAVS that is induced by $D_\mathcal{T}$, where $D_\mathcal{T}$ is the LCA-based tree distance function. It is important to note that while there could exist multiple HAVS that are induced by $D_\mathcal{T}$, we aim at defining an HAVS that is inherently able to adapt the learning capacity for each class, and is also capable of performing hierarchical multi-level classification. Therefore, instead of defining a subspace for only the fine-grained classes, we define it for all the classes in $\mathcal{T}$ to permit the generalization of HAVS to any level of the hierarchy tree.

We define a vector space $V_\mathcal{T}$ using an orthonormal bases $\calE=\{e_1,\ldots,e_n\}$ of $\bbR^n$. Without loss of generality, we create an arbitrary bijective assignment of basis vectors $e_i\in\calE$ to nodes $v_i\in\calV_\calT$. We define $\mathcal{E}^{(l)}$ as the set of basis vectors corresponding to the nodes at level $l$. For a given node $v_i\in\calV_\calT$, we also define $\calE^a_i$ as the basis set corresponding to $f_a(v_i)$ and $\calE^d_i$ as the set of basis corresponding to $f_d(v_i)$. For defining the vector space corresponding to a node $v_i$, we define the basis set as $\calE_i=\calE^a_i\cup \{e_i\}\cup\calE^d_i$,  
and  the corresponding subspace as $V_i=\text{span}(\calE_i), \forall i\in [n]$, where $[n] = \{k \mid k \in \mathbb{Z}^+, k \leq n\}$. Finally, the vector space $V_\calT=\text{span}(\calE)$\footnote{A summary of notations is given in the supplementary.}. In Theorem \ref{theorem:hiercos} (proof in supplementary), we observe that \ul{\textit{Hier-COS will be an HAVS if the bases are orthogonal and the feature vectors in a subspace span all its dimensions}}, thus motivating our design choices.
A direct implication of Theorem \ref{theorem:hiercos} is Proposition \ref{proposition:consistency} (proof in supplementary), stating that the label hierarchy predicted using Hier-COS is guaranteed to be hierarchically consistent.

\begin{theorem} \label{theorem:hiercos}
    Consider a vector space $V_\mathcal{T}$ and its subspaces for fine-grained classes $\{V_{y_1}, \cdots, V_{y_K}\}$ spanned by $\{\mathcal{E}_{y_1}, \cdots, \mathcal{E}_{y_K}\}$ defined using the hierarchy tree $\mathcal{T}$, as discussed above. For all $\mathbf{x} \in V_\mathcal{H}$, if $\mathbf{x} \in V_{y_i}$ and $\langle \mathbf{x}, e_{i} \rangle^2 > 0, ~\forall~  e_{i} \in \mathcal{E}_{y_i}$, then $V_\mathcal{H}$ is an $n$-dimensional HAVS induced by an LCA-based tree distance function $D_\mathcal{T}$.
\end{theorem}

\begin{proposition} \label{proposition:consistency}
    The label hierarchy predicted using Hier-COS is consistent across all the levels, i.e.,  $\{\hat{y}^{(1)}, \ldots, \hat{y}^{(l)}, \ldots, \hat{y}^{(H)}\}$ is a valid path in the tree.
\end{proposition}

\subsection{Learning Problem Formulation}

To learn feature representations in $V_\mathcal{T}$, we begin with a backbone feature extractor $f_\Theta:\calX\rightarrow Z\subseteq\bbR^d$ that has sufficient capacity to perform flat classification over the label set $\mathcal{Y}$. Our goal is to use this feature extractor as a backbone and learn a non-linear mapping $f_\theta: Z\rightarrow V_\calT$ from the feature space $Z$ to a vector space $V_\mathcal{T} = \mathbb{R}^n$, i.e., $\mathbf{x} = [x_1, \ldots, x_n]^\intercal= f_\theta(\mathbf{z}), ~\mathbf{z} \in Z$. Note that the orthonormal vectors $e_i\in\calE$ serve as canonical basis for $\bx\in V_\calT$, i.e.,  $|x_i|, i\in[n]$ are the components along the basis vector $e_i$.

Based on this construction, we define the distance between a point and the subspace using orthogonal projections. Consider a vector $\bx\in V_\calT$ and a subspace $V_{y_j}\subset V_\calT$. The squared distance between $\bx$ and $V_{y_j}$ is computed as the distance between the point $\bx$ and its orthogonal projection onto $V_{y_j}$, computed using the projection operator $\bbP_{\calE_{y_j}}$, i.e., 
\begin{align}
    d_S^2(\bx,V_{y_j}) \!= \!\Vert\bbP_{\neg\calE_{y_j}} \bx\Vert^2 
    \!= \Vert\bx_{V_{y_j}^\perp}\Vert^2 =\! \sum_{e\in \neg\calE_{y_j}} x_e^2 
    \label{eqn:sub_dist}
\end{align}
where, $\lVert \cdot \rVert$ is the L-2 norm operator, and with a slight abuse of notation, we define $\neg\calE_{y_j}:=\{\calE\setminus\calE_{y_j}\}$ that serves as the set of basis vectors corresponding to the orthogonal complement of $V_{y_j}$, i.e., $V_{y_j}^\perp$. We also define $\bx_{V_{y_j}}$ and $\bx_{V_{y_j}^\perp}$ as projections of $\bx$ onto $V_{y_j}$ and $V_{y_j}^\perp$, respectively, such that $\bx = \bx_{V_{y_j}} + \bx_{V_{y_j}^\perp}$. Now, consider a vector $\bx$ that corresponds to an \emph{ideal sample} from class $y_i$, such that $\bx\in V_{y_i}$, i.e., $ \Vert\bx_{V_{y_i}^\perp}\Vert\!=\!0$. If the leaf classes $y_i$ and $y_j$ have a common parent, i.e., $LCA(y_i, y_j)\!=\!1$, from equation (\ref{eqn:sub_dist}), it is easy to see that the distance $d_S(\bx,V_{y_j}) = \vert x_{e_{y_i}}\vert$ because $\vert x_e \vert = 0 ~\forall e \in \{\neg\calE_{y_j} \setminus e_{y_i}\}$. Generalizing this idea to an arbitrary class $y_k$, the distance $d_S(\bx,V_{y_k})$ for any $\bx\in V_{y_i}$ is dependent only on the components $\vert x_e\vert,~e\in \{\calE_{y_i} \setminus \calE_{y_k}\}$. If the projection norm of the feature vector is concentrated entirely on the leaf, i.e. $\vert x_e \vert = 0, ~ \forall e \in \{\mathcal{E} \setminus e_{y_i}\}$, the distance between all the classes would be equal, i.e. $\vert x_{e_{y_i}} \vert$, defaulting to \emph{flat} classification. Contrarily, if the projection norm is concentrated entirely on the ancestors $(\calE^a_{y_i})$ and $\vert x_{e_{y_i}}\vert=0$, the leaf classes will be inseparable.  
Accordingly, we propose to distribute the magnitude of $\bx$ across the basis $e\in\calE_{y_i}$ to promote high leaf-level discrimination while respecting the semantic similarity via coarser classes. Therefore, we use a monotonically increasing weight function to distribute the magnitude of $\bx$ as we move from the basis vectors of the root node to the leaf. Furthermore, we desire $\vert x_e\vert=0,~\forall e\in\neg\calE_{y_i}$, resulting in a sparse vector $\bx$. 

Given the strength of modern backbones, we adopt the minimal complexity transformation module from HAFrame \cite{baseline-Liang_2023-haframe}, which serves as a mapping layer to project backbone features into Hier-COS. The model can be trained either end-to-end or by freezing the backbone and training only the transformation module, using the loss defined below.

\subsection{Loss Function}
Given a feature vector $\mathbf{x} = [x_1, \ldots, x_n]^\intercal \in V_\mathcal{T}$, our objective is to learn the distribution of $\vert x_i \vert ~\forall i \in [n]$. Note that we do not assume all the leaf nodes to be at the same level, thereby denoting the level of a leaf node as $h \leq H$. Usually, since $H\ll n$, we implicitly want to learn a mapping such that $\bx$ is sparse with only $\vert x_e\vert\neq 0,~e\in\calE_{y_i}$. For the weight distribution of $\bx$ across the basis $\calE_{y_i}$, we use a variant of the Tree path KL-divergence loss \cite{park2024learninghierarchicalsemanticclassification}. We define a target distribution $P$ over all the $n$ nodes by concatenating the weighted one-hot encoded vectors from all the hierarchical levels. Specifically, we define the weight distribution by the $n$-dimensional vector $\bw=[w_1\bdelta_1,\ldots,w_h\bdelta_h,\ldots,w_H\bdelta_H]^\intercal$, where $\bdelta_l$ represents a one-hot-encoded vector for the hierarchy level $l \leq h$, otherwise a zero vector. 
The weights $w_l$ for $l \in [h]$ are generated by the function $w_l=\exp\left(\frac{1}{h+1-l}\right)$. This exponential mapping ensures the sequence is strictly monotonically increasing and exhibits a heavy-tailed characteristic, where the influence of the terminal weights ($w_h$) is amplified. Crucially, the calculation of $w_l$ involves an exponent that remains positive and bounded ($\frac{1}{h} \!\le\! \frac{1}{h+1-l} \!\le\! 1$), which is deliberately designed to maintain {numerical stability.
The target distribution $P$ is then obtained by normalizing the weight vector $\bw$.
Our first loss term is the KL divergence between the target ($P$) and the predicted ($\widehat{P}$) weighted tree path distributions:
\begin{align}
    \mathcal{L}_{kl} = \text{KL}(P||\widehat{P}) = \sum_{i\in[n]} P_i~\log\frac{P_i}{\widehat{P}_i}
\end{align}
where, $\widehat{P}_i = \frac{e^{\vert \mathbf{x}_i \vert}}{\sum_{j=1}^{n} e^{\vert \mathbf{x}_j \vert}}$ ~ $\forall i = [n]$ is computed using the softmax function to ensure that the distance between the correct and incorrect classes is relatively high. As discussed before, the class subspaces only span at most $H$ dimensions and therefore require $\bx$ to be sparse. Specifically, a leaf node at level $h$ is only associated with $h$ hierarchical class labels, implying that the feature vectors must span exactly one dimension for levels $l \leq h$ and zero dimensions for levels $h < l \leq H$. Therefore, we use a regularization term:
\begin{align}
     \mathcal{L}_{reg} = \sum_{l=1}^{h} \bigg\lVert \boldsymbol\delta_l - \frac{\mathbb{P}_{\mathcal{E}^{(l)}}\mathbf{x}}{\lVert \mathbb{P}_{\mathcal{E}^{(l)}}\mathbf{x} \rVert} \bigg\rVert_1 + \sum_{l=h}^H \bigg\lVert \frac{\mathbb{P}_{\mathcal{E}^{(l)}}\mathbf{x}}{\lVert \mathbb{P}_{\mathcal{E}^{(l)}}\mathbf{x} \rVert}  \bigg\rVert_1
\end{align}
where, $\lVert \cdot \rVert_1$ represents L1-norm.
The final training loss for each sample is given by $\calL_{total} = \calL_{kl} + \alpha \calL_{reg}$.
During inference, we simply pick the class using $\widehat{y}=\arg\max_{y_i\in\calV_\ell} \lVert\bbP_{\mathcal{E}_{y_i}}\bx \rVert$.

\section{Hierarchical Evaluation Metrics}\label{sec:hops}

\begin{figure*}[ht!]
    \centering
    \includegraphics[width=0.8\linewidth]{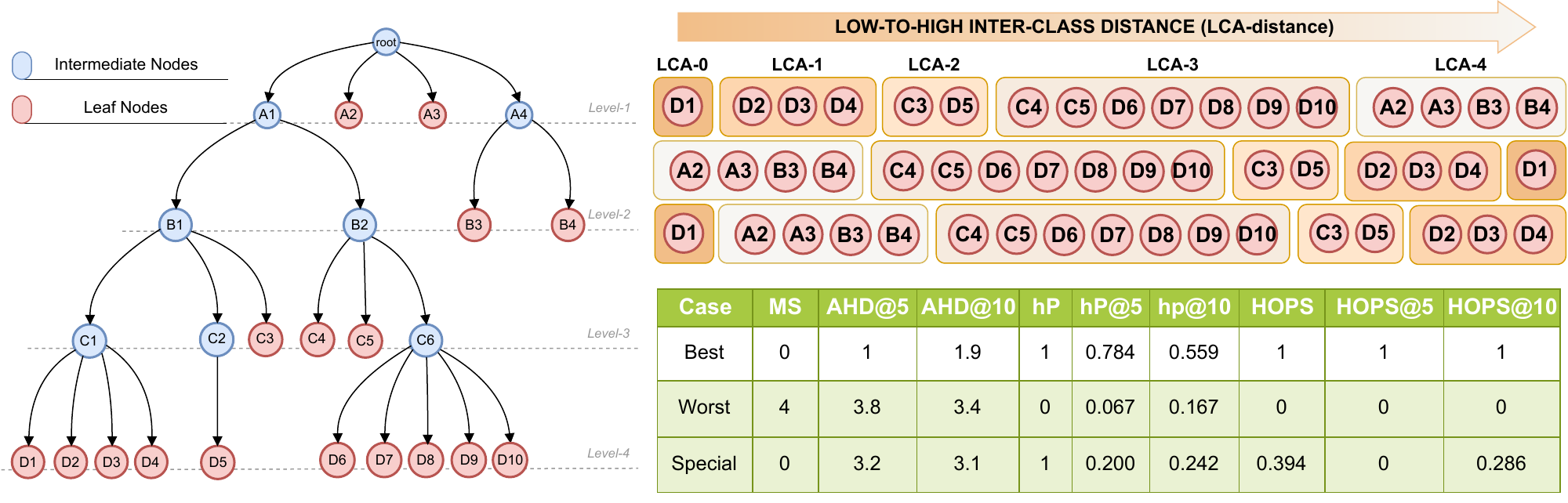}
    \caption{Best Case: The ground truth and predicted preference ordering is identical. Worst Case: The reverse preference ordering is obtained. Special Case: The top-1 prediction is correct and the following  predictions are in reverse preference order.}
    \label{fig:HOPS-motivation}
\end{figure*}

Limitations of existing metrics have also been pointed out before \cite{baseline-hafeat}. We highlight the key gaps and propose a novel, partial preference order based metric and defer a detailed discussion to the supplementary. 

\subsection{Overview of Shortcomings}

The LCA-based tree distance function has been widely adopted for defining the desired partial preference order over classes, owing to its ability to provide a more nuanced measure of semantic similarity, or equivalently, severity of mistakes, through the consideration of shared class ancestry.
Mistake Severity (MS) \cite{bertinetto_2020_CVPR} reports the average LCA distance between the true and the predicted classes for \emph{all} the misclassified samples. \cite{karthik_2021_ICLR} points out the bias in MS that leads to \emph{better, but more} mistakes, and proposes Average Hierarchical Distance (AHD) that averages LCA distance over \emph{all} samples. However, this leads to a bias towards higher top-1 accuracy without adequately capturing the severity of mistakes \cite{baseline-hafeat}. Other metrics like hierarchical Precision (hP) and Recall (hR) have been used, but can be written as variants of a normalized version of AHD and thus suffer the same limitations. Moreover, directly using averaged LCA-based distances in the metric makes it dependent on the properties of the specific tree (e.g., height, imbalance and branching factor) and harder to interpret and compare. This difficulty is evident from the small relative differences across methods for a given dataset, and large differences across datasets while reporting MS, AHD and its $@k$ in experiments reported by us and others \cite{karthik_2021_ICLR,baseline-hafeat,baseline-Liang_2023-haframe}. 


We consider the hierarchy tree and the specific cases presented in Fig. \ref{fig:HOPS-motivation}, where $D1$ is the true class and three possible prediction cases are analysed. We see that the MS and AHD@$k$ are unnormalized and, therefore, difficult to interpret. Moreover, AHD@$k$ and hP@$k$ deteriorate even for the best case as $k$ increases. This also leads to a reduction of the dynamic range of the hP score, making it difficult to interpret despite being normalized. Another severe shortcoming of AHD@$k$ is that it computes the average LCA over the top-$k$ predictions. Since the average is permutation invariant, the AHD@5 for the best case in Fig. \ref{fig:HOPS-motivation} and a prediction instance with a much worse top-5 order as $\{C3,D4,D3,D2,D1\}$ will yield the exact same AHD@5 score. Fig. \ref{fig:teaser} also quantitatively highlights these limitations using large, real-world datasets. A detailed discussion on other evaluation metrics and their limitations, like TPO, MRR, MNR and NDCG \cite{dezert2024distancespartialpreferenceorderings, mnr-ndcg} is in the supplementary.

\subsection{Hierarchically Ordered Preference Score}
To address the limitations of existing hierarchical evaluation metrics, we introduce a preference order-based metric, Hierarchically Ordered Preference Score (HOPS). Due to page limit constraints, we provide an intuitive description of HOPS and its variant HOPS$@k$, and defer the technical details to the supplementary. In Fig.~\ref{fig:teaser}, we compared the percentage of samples with a correct prediction order, which can be used to quantify the hierarchical performance. However, similar to the accuracy metric, it is more strict as it considers the prediction order to either be correct or incorrect, disregarding the severity of mistakes. To address this, for each target class, we associate a desired rank to each class based on their partial preference order to obtain a $K$-dimensional vector representation $z=[z_1, \ldots, z_K]$, wherein same ranks are given to classes that are at the same LCA distance. During inference, we get the order of classes based on their prediction probabilities, and map them to their desired ranks using $z$ to obtain a vector representation $\widehat{z} = [\widehat{z}_1 \ldots, \widehat{z}_K]$. To compute HOPS, we simply calculate the weighted absolute difference between the vector representations of the desired and the predicted preference orders
\begin{align} \label{eq:hops}
    \text{HOPS} = 1- \frac{s}{s_{max}} =  1 - \frac{\sum_{j=1}^{K} \eta_j \cdot |z_j - \hat{z_j}|}{\sum_{j=1}^{K} \eta_j \cdot |z_j - z_j^*|}
\end{align}
where $z^* = [z_1^*, \ldots, z_K^*] = [z_{K:-1:1}]$\footnote{In $z_{a:c:b}$, $a$ is the start, $b$ is the stop and $c$ is the step-size} is the rank for the worst prediction order, i.e., the reverse of $z$, and $\eta_j$'s are defined using a multi-step exponential-linear decay function. 

We are often interested in only the top-$k$ performance of a classifier, particularly, when there are a large number of classes. We define the HOPS@$k$ variants, simply by modifying the notion of the worst achievable score and adapting the weights. For HOPS@$k$, we define the worst ranked vector $z_k^* = [z_{k:-1:1},z_{k+1:1:K}]$ yielding the score as $s^k_{max}$ which is then used to compute HOPS@$k$: 
\begin{equation}
    \text{HOPS}@k = \max\left(0,1-\frac{s^k}{s^k_{max}}\right)
\end{equation}
where $s^k$ is computed using (\ref{eq:hops}), but with $\eta_{j>k}=0$. 
Classifier performance is reported using scores averaged over all samples. We emphasize that for $k=1$, the HOPS@$k$ is equal to the top-1 accuracy. This is aligned with the design of this metric, where the HOPS@$k$ concerns with the ranking performance of up to $k^{th}$ ranked prediction only. 

\section{Experiments, Results and Analysis}\label{sec:expt}
In this section, we summarize the results on FGVC-Aircraft \cite{fgvc-aircraft}, CIFAR-100 \cite{krizhevsky_2009_CIFAR100} and iNaturalist-19 \cite{van_inat_2018_CVPR} in Tables \ref{tab:hfgvc-fgvc}, \ref{tab:hfgvc-cifar} and \ref{tab:hfgvc-inat}, respectively. We report and analyze the results on tieredImageNet-H separately in the supplementary section. For methods denoted with an asterisk (*), we simply use the metrics reported by \cite{baseline-hafeat}. For each metric (column) in these tables, we highlight the cells using a color scale, where `white' represents the bottom 50 percentile values and \colorbox[HTML]{b6d7a8}{`green'} represents the best metric. Details on the dataset, baseline methods, backbones used, evaluation metrics, implementation details, training configuration and ablations are provided in the supplementary.

We observe an improvement of $1.2\%$, $0.26\%$ and $0.68\%$ in the top-1 accuracy using our approach for FGVC-Aircraft, CIFAR-100 and iNaturalist-19, respectively, as compared to HAFrame. The improvement in the case of ViT is $2.42\%$, which is a significant gain in comparison to the baseline. We also observe that, except for FGVC-Aircraft, we are competitive or better than SOTA in terms of MS across all the experiments. A noteworthy observation is that $K=100$ and $n=200$ for FGVC-Aircraft. This implies that a lot of intermediate nodes have only one child, which is also evident from the number of classes having the smallest LCA distance $(1, 2, 3)$ are $(46, 41, 13)$, respectively. We point out that the maximum LCA distance for FGVC-Aircraft is $3$, and metrics like MS and AHD, when averaged over all samples, generate biased measure of mistake severity, as also discussed in \cite{baseline-hafeat}. Moreover, MS only considers the average cost of mistakes, and thus is reported on different numbers of samples. Hence, it needs to be compared only when coupled with the top-1 accuracy. This is also evident for the `Soft-labels $\beta=4$' and `Barz \& Denzler' methods in Tables \ref{tab:hfgvc-cifar} and \ref{tab:hfgvc-inat}, respectively. We also observe that the reported AHD@$k$ metric for Hier-COS is consistently improved over all the baseline methods. 

Considering multiple metrics like top-1 accuracy and MS or AHD@k for evaluation makes comparisons ambiguous. For instance, Flamingo-I in Table \ref{tab:hfgvc-cifar}, as compared to HAFeat-I, has an improved top-1 accuracy and AHD@\{5, 20\} but a worse MS and AHD@1. Thus, it is unclear how to quantify which method is better. HOPS addresses this issue by considering not only the top-1 accuracy and severity of mistakes but also the preference order and structure of the hierarchy tree. We note that Hier-COS does consistently better w.r.t. HOPS across all experimental settings. Particularly relevant is the significant improvement over HAFrame for iNaturalist-19 because of the higher number of classes, thus highlighting the proficiency of Hier-COS in learning stronger hierarchical representations. As the ordering of \emph{all} predictions may not be relevant, we observe that Hier-COS performs consistently better for HOPS@\{5,20\}. 

We further validate the richness of the hierarchy-aware feature representations by comparing the order of predictions in Fig \ref{fig:teaser}. We observe that the percentage of samples with incorrect order drastically reduces for all the baselines across all the datasets. However, Hier-COS is largely able to maintain its predictions, even for higher values of $k$. 


Lastly, we report the FPA and HOPS on FGVC-Aircraft, CIFAR-100 and iNaturalist-19 in Table \ref{tab:hier-consistency} along with the top-1 accuracy. We observe that the FPA is improved by $3.64\%$ for FGVC-Aircraft, by $1.36\%$ for CIFAR-100 and by $1.51\%$ for iNaturalist-19, as compared to the best method, with significant gains in HOPS. We also observe that the drop in top-1 accuracy and FPA, i.e. (Accuracy - FPA), is the least in Hier-COS, thus confirming its ability to promote hierarchical consistency. We further highlight that all the previous methods impose explicit constraints in the loss function to achieve hierarchical consistency, however, our framework implicitly promotes consistency and does not require additional constraints. We also report the level-wise accuracies in the supplementary 
and report that we perform better or competitive compared to all the baselines. Additionally, although Cross-Entropy and HAFrame do not explicitly allow classification across all the hierarchical levels, we compute the probabilities for a super-class as the sum of probabilities of all its children \cite{baseline-hafeat} to compare the FPA on CIFAR-100. The FPA for Cross-Entropy and HAFrame is 77.11 $\pm$ 0.002 and 77.0 $\pm$ 0.002, respectively, which is lower compared to Hier-COS.

\begin{table*}[!htp]\centering
\scriptsize
\resizebox{\textwidth}{!}{
    \begin{tabular}{l|c|c|c|c|c|c|c|c}\toprule
    \textbf{Method} & \textbf{Accuracy $(\uparrow)$} & \textbf{MS $(\downarrow)$} & \textbf{AHD@1 $(\downarrow)$} & \textbf{AHD@5 $(\downarrow)$} & \textbf{AHD@20 $(\downarrow)$} & \textbf{HOPS $(\uparrow)$} & \textbf{HOPS@5 $(\uparrow)$} & \textbf{HOPS@20 $(\uparrow)$} \\\midrule
    Cross Entropy &79.57 $\pm$ 0.731
    &2.09 $\pm$ 0.011
    &0.43 $\pm$ 0.014
    &2.10 $\pm$ 0.004
    &2.67 $\pm$ 0.001
    &0.56 $\pm$ 0.003&0.51 $\pm$ 0.005&0.57 $\pm$ 0.003
    \\
    Flamingo-I&\cellcolor[HTML]{d3e7ca} 80.71 $\pm$ 0.408&\cellcolor[HTML]{e3f0de} 2.07 $\pm$ 0.039&0.40 $\pm$ 0.013
    &2.06 $\pm$ 0.005
    &2.65 $\pm$ 0.001
    & 0.58 $\pm$ 0.004& 0.54 $\pm$ 0.003& 0.59 $\pm$ 0.003
    \\
    Flamingo-II&\cellcolor[HTML]{f7fbf5} 80.07 $\pm$ 0.364&2.10 $\pm$ 0.014
    &0.42 $\pm$ 0.008
    &2.08 $\pm$ 0.007
    &2.66 $\pm$ 0.001
    & 0.57 $\pm$ 0.005& 0.52 $\pm$ 0.005& 0.58 $\pm$ 0.004
    \\
    HAFeat-I&73.30 $\pm$ 0.361
    &2.50 $\pm$ 0.074
    &0.67 $\pm$ 0.027
    &2.10 $\pm$ 0.010
    &2.61 $\pm$ 0.008
    & 0.53 $\pm$ 0.002& 0.45 $\pm$ 0.002& 0.53 $\pm$ 0.002
    \\
    HAFeat-II&74.53 $\pm$ 0.592
    &2.53 $\pm$ 0.033
    &0.65 $\pm$ 0.022
    &2.10 $\pm$ 0.006
    &2.61 $\pm$ 0.004
    & 0.53 $\pm$ 0.009& 0.45 $\pm$ 0.012& 0.53 $\pm$ 0.009
    \\
    HAFrame &\cellcolor[HTML]{e3f0de}80.55 $\pm$ 0.346&\cellcolor[HTML]{b6d7a8}2.00 $\pm$ 0.029&\cellcolor[HTML]{c6dfbb}0.39 $\pm$ 0.009&\cellcolor[HTML]{b8d8aa}1.74 $\pm$ 0.006&\cellcolor[HTML]{bad9ad}2.45 $\pm$ 0.003&\cellcolor[HTML]{dbebd4} 0.86 $\pm$ 0.005&\cellcolor[HTML]{dbebd4}0.81 $\pm$ 0.005&\cellcolor[HTML]{dbebd4}0.86 $\pm$ 0.005
    \\
    Hier-COS &\cellcolor[HTML]{b6d7a8}81.75 $\pm$ 1.006&2.09 $\pm$ 0.029&\cellcolor[HTML]{b6d7a8}0.38 $\pm$ 0.020&\cellcolor[HTML]{b6d7a8}1.73 $\pm$ 0.009&\cellcolor[HTML]{b6d7a8}2.44 $\pm$ 0.003&\cellcolor[HTML]{b6d7a8}0.89 $\pm$ 0.007&\cellcolor[HTML]{b6d7a8}0.84 $\pm$ 0.010&\cellcolor[HTML]{b6d7a8}0.88 $\pm$ 0.007\\
    \bottomrule
    \end{tabular}
}
\caption{Results comparing the performance of hierarchical classification on the test set of FGVC-Aircraft.}\label{tab:hfgvc-fgvc}
\end{table*}

\begin{table*}[!htp]\centering
\scriptsize
\resizebox{\textwidth}{!}{
    \begin{tabular}{l|c|c|c|c|c|c|c|c}\toprule
    \textbf{Method} & \textbf{Accuracy $(\uparrow)$} & \textbf{MS $(\downarrow)$} & \textbf{AHD@1 $(\downarrow)$} & \textbf{AHD@5 $(\downarrow)$} & \textbf{AHD@20 $(\downarrow)$} & \textbf{HOPS $(\uparrow)$} & \textbf{HOPS@5 $(\uparrow)$} & \textbf{HOPS@20 $(\uparrow)$} \\\midrule
    Cross Entropy &\cellcolor[HTML]{bddbb1}77.77 $\pm$ 0.248&2.33 $\pm$ 0.014
    &\cellcolor[HTML]{e6f1e2}0.52 $\pm$ 0.007&2.25 $\pm$ 0.007&3.19 $\pm$ 0.004&0.54 $\pm$ 0.001&0.05 $\pm$ 0.001&0.17 $\pm$ 0.001
    \\
    Barz \& Denzler* &68.31 $\pm$ 0.004
    &2.36 $\pm$ 0.025
    &0.75 $\pm$ 0.012
    &\cellcolor[HTML]{d8e9d0}1.25 $\pm$ 0.364&\cellcolor[HTML]{e0eeda}2.49 $\pm$ 0.004&-&-&-
    \\
    YOLO-v2* &67.97 $\pm$ 0.006&3.72 $\pm$ 0.022&1.19 $\pm$ 0.019&2.85 $\pm$ 0.010&3.39 $\pm$ 0.011&-&-&-
    \\
    HXE+CRM* &\cellcolor[HTML]{c7e1bc}77.63 $\pm$ 0.280&\cellcolor[HTML]{fcfdfb}2.30 $\pm$ 0.026&\cellcolor[HTML]{cee4c5} 0.51 $\pm$ 0.009&\cellcolor[HTML]{b6d7a8} 1.11 $\pm$ 0.008&\cellcolor[HTML]{b6d7a8}2.18 $\pm$ 0.003&-&-&-
    \\
    Soft-labels ($\beta$=30)* &73.01 $\pm$ 0.003
    &2.38 $\pm$ 0.004&0.64 $\pm$ 0.008
    &\cellcolor[HTML]{fafcf9}1.39 $\pm$ 0.027&2.79 $\pm$ 0.005&-&-&-
    \\
    Soft-labels ($\beta$=4)* &67.85 $\pm$ 0.008&\cellcolor[HTML]{c7e0bc}2.21 $\pm$ 0.037&0.71 $\pm$ 0.024&\cellcolor[HTML]{d3e6ca}1.23 $\pm$ 0.018&\cellcolor[HTML]{bcdab0}2.23 $\pm$ 0.008&-&-&-
    \\
    Flamingo-I &\cellcolor[HTML]{b6d7a8}77.87 $\pm$ 0.137&2.31 $\pm$ 0.026
    &\cellcolor[HTML]{cee4c5}0.51 $\pm$ 0.007&2.06 $\pm$ 0.016
    &3.08 $\pm$ 0.006
    &0.57 $\pm$ 0.002&0.09 $\pm$ 0.004&0.21 $\pm$ 0.004
    \\
    Flamingo-II &75.26 $\pm$ 0.508&2.32 $\pm$ 0.029&0.57 $\pm$ 0.017&2.41 $\pm$ 0.007
    &3.29 $\pm$ 0.017&0.51 $\pm$ 0.003&0.03 $\pm$ 0.001&0.14 $\pm$ 0.004\\
    HAFeat-I &\cellcolor[HTML]{cfe5c6}77.51 $\pm$ 0.224&\cellcolor[HTML]{d9ead1}2.24 $\pm$ 0.025&\cellcolor[HTML]{b6d7a8}0.50 $\pm$ 0.006&1.43 $\pm$ 0.009
    &\cellcolor[HTML]{f4f9f2}2.64 $\pm$ 0.003&\cellcolor[HTML]{f4f9f2}0.69 $\pm$ 0.001&\cellcolor[HTML]{f4f9f2}0.41 $\pm$ 0.007&\cellcolor[HTML]{f4f9f2}0.47 $\pm$ 0.004\\
    HAFeat-II &76.47 $\pm$ 0.219&\cellcolor[HTML]{eaf3e6}2.27 $\pm$ 0.019&0.54 $\pm$ 0.008&1.70 $\pm$ 0.012&2.84 $\pm$ 0.007&0.63 $\pm$ 0.001&0.22 $\pm$ 0.005&0.32 $\pm$ 0.001\\
    HAFrame &\cellcolor[HTML]{cee4c5}77.53 $\pm$ 0.211&\cellcolor[HTML]{d9ead1}2.24 $\pm$ 0.019&\cellcolor[HTML]{b6d7a8}0.50 $\pm$ 0.008&\cellcolor[HTML]{b8d8aa}1.12 $\pm$ 0.005&\cellcolor[HTML]{b6d7a8}2.18 $\pm$ 0.002&\cellcolor[HTML]{b6d7a8}0.92 $\pm$ 0.001&\cellcolor[HTML]{b6d7a8}0.72 $\pm$ 0.003&\cellcolor[HTML]{b6d7a8}0.83 $\pm$ 0.002\\
    Hier-COS&\cellcolor[HTML]{b6d7a8}77.79 $\pm$ 0.145&\cellcolor[HTML]{c7e0bc}2.21 $\pm$ 0.017&\cellcolor[HTML]{b6d7a8}0.49 $\pm$ 0.006&\cellcolor[HTML]{b6d7a8}1.09 $\pm$ 0.005&\cellcolor[HTML]{b6d7a8}2.17 $\pm$ 0.003&\cellcolor[HTML]{b6d7a8}0.93 $\pm$ 0.001&\cellcolor[HTML]{b6d7a8}0.76 $\pm$ 0.002&\cellcolor[HTML]{b6d7a8}0.85 $\pm$ 0.002\\
    \bottomrule
    \end{tabular}
}
\caption{Results comparing the performance of hierarchical classification on the test set of CIFAR-100.}\label{tab:hfgvc-cifar}
\end{table*}

\begin{table*}[!htp]\centering
\scriptsize
\resizebox{\textwidth}{!}{
    \begin{tabular}{l|c|c|c|c|c|c|c|c}\toprule
    \textbf{Method} & \textbf{Accuracy $(\uparrow)$} & \textbf{MS $(\downarrow)$} & \textbf{AHD@1 $(\downarrow)$} & \textbf{AHD@5 $(\downarrow)$} & \textbf{AHD@20 $(\downarrow)$} & \textbf{HOPS $(\uparrow)$} &\textbf{HOPS@5 $(\uparrow)$} &\textbf{HOPS@20 $(\uparrow)$} \\\midrule
    Cross Entropy &\cellcolor[HTML]{cfe5c5}70.86 $\pm$ 0.137&2.23 $\pm$ 0.012&0.65 $\pm$ 0.003&1.95 $\pm$ 0.007&3.37 $\pm$ 0.003&0.46 $\pm$ 0.001&0.28 $\pm$ 0.002&0.08 $\pm$ 0.001
    \\
    Barz \& Denzler* &37.37 $\pm$ 0.278 
    &\cellcolor[HTML]{b6d7a8}1.99 $\pm$ 0.008 &1.24 $\pm$ 0.005 
    &\cellcolor[HTML]{fdfefd}1.49 $\pm$ 0.005 &\cellcolor[HTML]{eaf3e6}1.97 $\pm$ 0.005&-&-&-
    \\
    YOLO-v2* &55.83 $\pm$ 0.046 &2.43 $\pm$ 0.001 
    &1.10 $\pm$ 0.001 &1.50 $\pm$ 0.001 &\cellcolor[HTML]{eef6eb}1.99 $\pm$ 0.002&-&-&-
    \\
    HXE+CRM* &\cellcolor[HTML]{e0eeda}70.67 $\pm$ 0.210&2.16 $\pm$ 0.005
    &\cellcolor[HTML]{ebf4e7} 0.63 $\pm$ 0.006&\cellcolor[HTML]{bcdaaf}1.17 $\pm$ 0.004&\cellcolor[HTML]{bad9ad}1.75 $\pm$ 0.003&-&-&-
    \\
    Soft-labels ($\beta$=30)* &58.01 $\pm$ 0.126 
    &2.31 $\pm$ 0.009 &0.97 $\pm$ 0.007 
    &\cellcolor[HTML]{ebf4e7}1.40 $\pm$ 0.005 &\cellcolor[HTML]{ddecd6}1.91 $\pm$ 0.005&-&-&-
    \\
    Soft-labels ($\beta$=4)* &22.66 $\pm$ 0.262 &\cellcolor[HTML]{d4e7cc}2.06 $\pm$ 0.012 &1.60 $\pm$ 0.007 &1.72 $\pm$ 0.008 
    &2.14 $\pm$ 0.007
    &-&-&-
    \\
    Flamingo-I &\cellcolor[HTML]{fbfdfa}70.37 $\pm$ 0.124&\cellcolor[HTML]{f8fbf7}2.14 $\pm$ 0.011&\cellcolor[HTML]{ebf4e7}0.63 $\pm$ 0.004&1.78 $\pm$ 0.008
    &3.27 $\pm$ 0.008
    &0.47 $\pm$ 0.002&0.34 $\pm$ 0.003&0.09 $\pm$ 0.002
    \\
    Flamingo-II &70.19 $\pm$ 0.168&2.16 $\pm$ 0.006&\cellcolor[HTML]{f8fbf7}0.64 $\pm$ 0.004&1.77 $\pm$ 0.007
    &3.30 $\pm$ 0.010
    &0.47 $\pm$ 0.002&0.35 $\pm$ 0.004&0.09 $\pm$ 0.002
    \\
    HAFeat-I &\cellcolor[HTML]{c6e0bb}70.96 $\pm$ 0.133&\cellcolor[HTML]{fcfdfc}2.15 $\pm$ 0.022&\cellcolor[HTML]{ddecd7}0.62 $\pm$ 0.006&1.55 $\pm$ 0.207&2.67 $\pm$ 0.406
    &0.55 $\pm$ 0.044&0.47 $\pm$ 0.091&0.23 $\pm$ 0.107\\
    HAFeat-II &70.26 $\pm$ 0.292&2.19 $\pm$ 0.012&0.65 $\pm$ 0.008&\cellcolor[HTML]{f7fbf6}1.46 $\pm$ 0.004&2.46 $\pm$ 0.005&0.55 $\pm$ 0.001&\cellcolor[HTML]{f4f9f2}0.51 $\pm$ 0.002&\cellcolor[HTML]{f4f9f2}0.28 $\pm$ 0.002\\
    HAFrame &\cellcolor[HTML]{b6d7a8}71.13 $\pm$ 0.139&\cellcolor[HTML]{d0e5c7}2.05 $\pm$ 0.007&\cellcolor[HTML]{b6d7a8}0.59 $\pm$ 0.001&\cellcolor[HTML]{b6d7a8}1.14 $\pm$ 0.002&\cellcolor[HTML]{b8d8aa}1.74 $\pm$ 0.001&\cellcolor[HTML]{b8d8aa}0.89 $\pm$ 0.001&\cellcolor[HTML]{b8d8aa}0.70 $\pm$ 0.001&\cellcolor[HTML]{b8d8aa}0.79 $\pm$ 0.001\\
    
    Hier-COS &\cellcolor[HTML]{b8d8aa}71.15 $\pm$ 0.037&\cellcolor[HTML]{cce3c2}2.06 $\pm$ 0.004&\cellcolor[HTML]{b6d7a8}0.59 $\pm$ 0.002&\cellcolor[HTML]{b6d7a8}1.13 $\pm$ 0.001&\cellcolor[HTML]{b6d7a8}	1.72 $\pm$ 0.001&\cellcolor[HTML]{b8d8aa}0.96 $\pm$ 0.000&\cellcolor[HTML]{b8d8aa}0.71 $\pm$ 0.000&\cellcolor[HTML]{b8d8aa}0.81 $\pm$ 0.001\\ \midrule
    ViT-Cross Entropy& 78.39 $\pm$ 0.068&\cellcolor[HTML]{b6d7a8}1.72 $\pm$ 0.005&0.37 $\pm$ 0.001&1.38 $\pm$ 0.001&2.61 $\pm$ 0.002& 0.53 $\pm$ 0.000&0.52 $\pm$ 0.001&0.25 $\pm$ 0.001\\ 
    ViT-Hier-COS& \cellcolor[HTML]{b6d7a8}80.81 $\pm$ 0.063&1.73 $\pm$ 0.011&\cellcolor[HTML]{b6d7a8}0.33 $\pm$ 0.003&\cellcolor[HTML]{b6d7a8}0.97 $\pm$ 0.002&\cellcolor[HTML]{b6d7a8}1.61 $\pm$ 0.001& \cellcolor[HTML]{b6d7a8} 0.98 $\pm$ 0.001&\cellcolor[HTML]{b6d7a8} 0.80 $\pm$ 0.001&\cellcolor[HTML]{b6d7a8} 0.86 $\pm$ 0.002\\ 
    \bottomrule
    \end{tabular}
}
\caption{Results comparing the performance of hierarchical classification on the test set of iNaturalist-19. The top partition compares the performance using ResNet-50 as the feature extractor, while the bottom partition compares the performance using a pre-trained ViT.}\label{tab:hfgvc-inat}
\end{table*}

\begin{table*}[!htp]\centering
\scriptsize
\resizebox{\textwidth}{!}{
    \begin{tabular}{l|ccc|ccc|ccc}
    \toprule
     \textbf{Method} &    \textbf{Accuracy $(\uparrow)$}& \textbf{FPA $(\uparrow)$}& \textbf{HOPS$(\uparrow)$}& \textbf{Accuracy $(\uparrow)$} &\textbf{FPA $(\uparrow)$}& \textbf{HOPS $(\uparrow)$} &  \textbf{Accuracy $(\uparrow)$}&\textbf{FPA $(\uparrow)$}&\textbf{HOPS $(\uparrow)$}\\\midrule
    Flamingo-I &   80.71 $\pm$ 0.408
    &77.92 $\pm$ 0.579&0.58 $\pm$ 0.004
    & \cellcolor[HTML]{b6d7a8}77.87 $\pm$ 0.137
    &75.19 $\pm$ 0.333
    &0.57 $\pm$ 0.002
     &  70.37 $\pm$ 0.124
    &68.63 $\pm$ 0.139  &0.47 $\pm$ 0.002
    \\
    Flamingo-II &   80.07 $\pm$ 0.364
    &77.70 $\pm$ 0.641&0.57 $\pm$ 0.005
    & 75.26 $\pm$ 0.508
    &72.43 $\pm$ 0.336
    &0.51 $\pm$ 0.003
     &  70.19 $\pm$ 0.168
    &68.75 $\pm$ 0.163  &0.47 $\pm$ 0.002
    \\
    HAFeat-I &   73.30 $\pm$ 0.361
    &54.38 $\pm$ 1.718&0.53 $\pm$ 0.002
    & 77.51 $\pm$ 0.224
    &\cellcolor[HTML]{e5f0e0}75.52 $\pm$ 0.223
    &\cellcolor[HTML]{f4f9f2}0.69 $\pm$ 0.001
     &  70.96 $\pm$ 0.133
    &33.36 $\pm$ 36.553  &0.55 $\pm$ 0.044
    \\
    HAFeat-II &   74.53 $\pm$ 0.592
    &55.73 $\pm$ 2.676&0.53 $\pm$ 0.009& 76.47 $\pm$ 0.219&73.24 $\pm$ 0.208
    &0.63 $\pm$ 0.001
     &  70.26 $\pm$ 0.292&68.01 $\pm$ 0.256  
    &0.55 $\pm$ 0.001
    \\
    Hier-COS &   \cellcolor[HTML]{b6d7a8}81.74 $\pm$ 1.000&\cellcolor[HTML]{b6d7a8}81.56 $\pm$ 1.020&\cellcolor[HTML]{b6d7a8}0.89 $\pm$ 0.007& \cellcolor[HTML]{b6d7a8}77.79 $\pm$ 0.145&\cellcolor[HTML]{b6d7a8}77.37 $\pm$ 0.118&\cellcolor[HTML]{b6d7a8}0.93 $\pm$ 0.001 &  \cellcolor[HTML]{b6d7a8}71.15 $\pm$ 0.037&\cellcolor[HTML]{b6d7a8}69.93 $\pm$ 0.028&
    \cellcolor[HTML]{b6d7a8}0.96 $\pm$ 0.000\\
    \bottomrule
    \end{tabular}
}
\caption{Hierarchical consistency on (left) FGVC-Aircraft, (center) CIFAR-100 and (right) iNat-19.} \label{tab:hier-consistency}
\end{table*}

\section{Conclusion}\label{sec:concl}

We introduced Hier-COS, a unified framework for hierarchical multi-class and multi-label classification, which ensures hierarchical consistency while implicitly adapting the learning capacity of each class based on hierarchy. Unlike prior methods that rely on multi-head architectures or manifold operations, it constructs hierarchy-aware representations through an orthogonal subspace composition that integrates seamlessly with existing deep backbones. 
Empirical evaluations show that Hier-COS consistently reduces the severity of mistakes, improves hierarchical consistency and achieves state-of-the-art hierarchical performance. The proposed HOPS metric addresses long-standing limitations in hierarchical evaluation by incorporating rank order and tree structure, offering a more interpretable and discriminative assessment of hierarchical models.

\section*{Acknowledgment}
This work was done under the ANRF (previously SERB) Grant\# CRG/2020/006049 from Govt. of India.  The authors acknowledge additional compute support from the Infosys Center for Artificial Intelligence at IIIT-Delhi. The authors also thank Anjaneya Sharma from IIIT-Delhi for his support in running additional analysis experiments in the post-submission phase of this paper.
    
{
    \small
    \bibliographystyle{ieeenat_fullname}
    \bibliography{main}
}


\clearpage
\setcounter{page}{1}
\maketitlesupplementary
We structure this supplementary material as follows, providing relevant references (in {\color{blue} blue}) to the main paper. 
In Sections \ref{sec:proof-theorem1} and \ref{sec:proof-proposition1}, we prove that our proposed formulation of Hier-COS results in an HAVS and is thereby hierarchically consistent (as mentioned in {\color{blue} Sec. 3.2} of the main paper). We then discuss a few important geometric properties of Hier-COS in Section \ref{sec:properties-hiercos}.
In Section \ref{supp:expt-setup}, we provide details about our experimental setup, dataset details, comparative baselines and implementation details (as mentioned in the {\color{blue} Sec. 5} of the main paper). In Section \ref{supp:expt}, we discuss the results on tieredImageNet-H (as mentioned in {\color{blue} Sec. 5} of the main paper). We also present the results of our ablation study and include qualitative results demonstrating the quality of feature representations learned by Hier-COS in comparison to previous methods. In Section \ref{supp:review}, we review the existing metrics used to evaluate hierarchical classification, information retrieval and ranking systems (as mentioned in {\color{blue} Sec. 4.1} of the main paper). We formally define HOPS in Section \ref{supp:hops}, and show its computation with an example (as mentioned in \textcolor{blue}{Sec. 4.2} of the main paper). We define and discuss the limitations of each metric. In Section \ref{supp:discussion}, we provide additional discussion on our framework, its complexity and some of the future research directions. We also provide additional supplementary files with our submission, which we summarize in Section \ref{supp:supp}. 

\section{Proof of Theorem 1} \label{sec:proof-theorem1}

\begin{theorem} \label{theorem:hiercos}
    Consider a vector space $V_\mathcal{T}$ and its subspaces for fine-grained classes $\{V_{y_1}, \cdots, V_{y_K}\}$ spanned by $\{\mathcal{E}_{y_1}, \cdots, \mathcal{E}_{y_K}\}$ defined using the hierarchy tree $\mathcal{T}$, as discussed above. For all $\mathbf{x} \in V_\mathcal{H}$, if $\mathbf{x} \in V_{y_i}$ and $\langle \mathbf{x}, e_{i} \rangle^2 > 0, ~\forall~  e_{i} \in \mathcal{E}_{y_i}$, then $V_\mathcal{H}$ is an $n$-dimensional HAVS induced by an LCA-based tree distance function $D_\mathcal{T}$.
\end{theorem}

\begin{proof}
    To prove $V_\tree$ is an HAVS, we must satisfy \textcolor{blue}{Def. 1}. Let $\vect{x}$ be a feature vector for class $y_i$ such that the theorem's conditions hold.
    We must show that for any two other classes $y_j, y_k \in \set{Y}$:
    \begin{align*}
        \text{if} \quad \distT{y_i}{y_j} &< \distT{y_i}{y_k} \quad \\
        \text{then} \quad |D_i - D_j| &< |D_i - D_k|
    \end{align*}
    where, $D_\mathcal{T}$ is an LCA-based distance function, which implies that $D_\mathcal{T}(y_i, y_j) < D_\mathcal{T}(y_i, y_k)$ if and only if \quad $\vert f_a(v_i) \cap f_a(v_j) \vert > \vert f_a(v_i) \cap f_a(v_k) \vert$.

    First, by the theorem's assumption $\vect{x} \in \subspace{y_i}$, the projection of $\vect{x}$ onto its own subspace is $\vect{x}$ itself. Thus, its distance to its own subspace is zero:
    \begin{equation} \label{eq:proof1-d-i}
        D_i = \norm{\vect{x} - \proj{y_i}(\vect{x})} = 0
    \end{equation}
    The HAVS condition simplifies to proving:
    \begin{align*}
        \text{if} \quad \distT{y_i}{y_j} &< \distT{y_i}{y_k} \\
        \text{then} \qquad D_j &< D_k
    \end{align*}

    From the main paper's definition (\textcolor{blue}{Eq. 2}), the squared distance from $\vect{x}$ to any subspace $V_{y_j}$ is the sum of squared components of $\vect{x}$ on the orthogonal complement basis $\neg\basis{y_j}$:
    \begin{equation} \label{eq:proof1-d-j-def}
        D_j^2(\vect{x}) = \sum_{e_m \in \neg\basis{y_j}} \ip{\vect{x}}{e_m}^2
    \end{equation}

    Now, we use the first assumption, $\vect{x} \in \subspace{y_i}$. This implies that $\vect{x}$ has no components outside of $\subspace{y_i}$. Formally, $\ip{\vect{x}}{e_m} = 0, \forall e_m \in \neg\basis{y_i}$, i.e., $\mathbf{x}$ is orthogonal to all the basis vectors not in $\basis{y_i}$. We can substitute this into Eq. \ref{eq:proof1-d-j-def}. The only basis vectors $e_m \in \neg\basis{y_j}$ that can contribute to the sum are those that are also in $\basis{y_i}$.
    \begin{equation*}
        D_j^2(\vect{x}) = \sum_{e_m \in \neg\basis{y_j} \cap \basis{y_i}} \ip{\vect{x}}{e_m}^2
    \end{equation*}
    This set is equivalent to the set difference $\basis{y_i} \setminus \basis{y_j}$. Thus, the distance is the energy of $\vect{x}$ projected onto the basis vectors that are in $y_i$'s path but not in $y_j$'s path:
    \begin{equation} \label{eq:proof1-d-j-final}
        D_j^2(\vect{x}) = \sum_{e_m \in \basis{y_i} \setminus \basis{y_j}} \ip{\vect{x}}{e_m}^2
    \end{equation}
    Similarly, for class $y_k$:
    \begin{equation} \label{eq:proof1-d-k-final}
        D_k^2(\vect{x}) = \sum_{e_m \in \basis{y_i} \setminus \basis{y_k}} \ip{\vect{x}}{e_m}^2
    \end{equation}

    Since we know that  $\vert f_a(v_i) \cap f_a(v_j) \vert > \vert f_a(v_i) \cap f_a(v_k) \vert$, i.e., the set of shared ancestors for $y_i$ and $y_j$ is larger than for $y_i$ and $y_k$, therefore:
    \begin{equation*}
        |\basis{y_i} \cap \basis{y_j}| > |\basis{y_i} \cap \basis{y_k}|
    \end{equation*}
    Since $\basis{y_i} \setminus \basis{y_j}$ is equivalent to $\basis{y_i} \setminus (\basis{y_i} \cap \basis{y_j})$, a larger intersection implies a smaller set difference. Therefore:
    \begin{equation*}
        |\basis{y_i} \setminus \basis{y_j}| < |\basis{y_i} \setminus \basis{y_k}|
    \end{equation*}
    Furthermore, because the LCA path is shared, the set difference $\basis{y_i} \setminus \basis{y_j}$ is a strict subset of $\basis{y_i} \setminus \basis{y_k}$:
    \begin{equation} \label{eq:proof1-subset}
        (\basis{y_i} \setminus \basis{y_j}) \subset (\basis{y_i} \setminus \basis{y_k})
    \end{equation}
    
    Finally, we use the second assumption of the theorem: $\ip{\vect{x}}{e}^2 > 0$ for all $e \in \basis{y_i}$.
    Since the sum for $D_j^2(\vect{x})$ (Eq. \ref{eq:proof1-d-j-final}) is over a strict subset of the basis vectors used for $D_k^2(\vect{x})$ (Eq. \ref{eq:proof1-d-k-final}), and all terms are strictly positive, the sum over the smaller set must be smaller.
    \begin{equation*}
        D_j^2(\vect{x}) < D_k^2(\vect{x}) \implies D_j(\vect{x}) < D_k(\vect{x})
    \end{equation*}
    Combining this with $D_i(\vect{x})=0$ (Eq. \ref{eq:proof1-d-i}), we have:
    \begin{equation*}
        |D_i(\vect{x}) - D_j(\vect{x})| < |D_i(\vect{x}) - D_k(\vect{x})|
    \end{equation*}
    This satisfies \textcolor{blue}{Def. 1}, proving that $V_\tree$ is an HAVS.
\end{proof}

\section{Proof of Proposition 1} \label{sec:proof-proposition1}

    \begin{lemma} \label{lemma:parent-child-dist}
            For any given node $v_j \in \nodes_\tree$ and its parent $v_p = \parent{v_j}$, $\subspace{j} \subset \subspace{p}$.
    \end{lemma}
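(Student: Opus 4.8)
The plan is to reduce the claimed subspace inclusion to a set inclusion of the spanning bases and then argue purely on the tree. Since $\subspace{j} = \text{span}(\calE_j)$ and $\subspace{p} = \text{span}(\calE_p)$ by construction, it suffices to prove $\calE_j \subseteq \calE_p$ and apply $\text{span}(\cdot)$ to both sides. So the first step is to unfold the definition $\calE_i = \calE^a_i \cup \{e_i\} \cup \calE^d_i$ for $i = j$ and $i = p$, and then show separately that each of the three pieces of $\calE_j$ lands inside $\calE_p$.

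The three ingredients I would use, all immediate from $v_p = \parent{v_j}$, are: (i) a node's ancestor set is its parent's ancestor set together with the parent, so $f_a(v_j) = f_a(v_p)\cup\{v_p\}$ and hence $\calE^a_j = \calE^a_p \cup \{e_p\}$; (ii) $v_j$, being a child of $v_p$, is a descendant of $v_p$, so $e_j \in \calE^d_p$; and (iii) every descendant of $v_j$ is a descendant of $v_p$, so $\calE^d_j \subseteq \calE^d_p$. Combining these with the trivial memberships $\calE^a_p \subseteq \calE_p$, $e_p \in \calE_p$, $\calE^d_p \subseteq \calE_p$ shows that each of $\calE^a_j$, $\{e_j\}$, $\calE^d_j$ sits inside $\calE_p$, whence $\calE_j \subseteq \calE_p$ and therefore $\subspace{j} \subseteq \subspace{p}$. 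I would also record in passing that discarding the root $v_0$ from ancestor sets is harmless: for a top-level $v_p$ one has $f_a(v_p) = \emptyset$ and $f_a(v_j) = \{v_p\}$, still consistent with (i).

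The part I expect to need the most care is the strictness of the inclusion $\subset$ asserted in the statement, since the argument above only delivers $\subseteq$. To upgrade to proper containment I would exhibit a basis vector in $\calE_p$ but not in $\calE_j$: any sibling $v_s$ of $v_j$ gives $e_s \in \calE^d_p \subseteq \calE_p$, while $e_s \notin \calE_j$ because $v_s$ is neither an ancestor of $v_j$, nor $v_j$ itself, nor a descendant of $v_j$. This makes $\subspace{j}$ a proper subspace of $\subspace{p}$ whenever $v_p$ has at least one child besides $v_j$. In the degenerate case where $v_j$ is the unique child of $v_p$, the two bases coincide and $\subspace{j} = \subspace{p}$; there I would either read $\subset$ as $\subseteq$, or simply note that the downstream use of this lemma (for hierarchical consistency) relies only on the nested chain $\subspace{\hat y^{(1)}} \supseteq \cdots \supseteq \subspace{\hat y^{(H)}}$, for which non-strict inclusions suffice.
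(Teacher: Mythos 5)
Your proof is correct and follows essentially the same route as the paper's: both unfold $\calE_j=\calE^a_j\cup\{e_j\}\cup\calE^d_j$ and $\calE_p=\calE^a_p\cup\{e_p\}\cup\calE^d_p$, use $\calE^a_j=\calE^a_p\cup\{e_p\}$ and $\{e_j\}\cup\calE^d_j\subseteq\calE^d_p$, and conclude the basis inclusion and hence $\subspace{j}\subseteq\subspace{p}$. Your remark on strictness is a genuine refinement rather than a gap: the paper's own argument likewise only yields $\subseteq$ (with equality exactly when $v_j$ is an only child, a case that actually occurs, e.g., in the FGVC-Aircraft hierarchy where $n=200$ for $K=100$), and as you note the use of the lemma in Proposition~\ref{proposition:consistency} requires only the non-strict nested inclusions.
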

    \begin{proof} 
        By definition, 
        \begin{align*}
            \basis{j} = \basis{j}^a \cup \{e_j\} \cup \basis{j}^d \\ 
            \basis{p} = \basis{p}^a \cup \{e_p\} \cup \basis{p}^d
        \end{align*}
        The ancestors of $v_j$ are $v_p$ and all of $v_p$'s ancestors, i.e.
        \begin{align*}
            \basis{j}^a = \basis{p}^a \cup \{e_p\}
        \end{align*}
        The descendants of $v_p$ include $v_j$ and all of $v_j$'s descendants, i.e. 
        \begin{align*}
            \basis{p}^d \supset \{e_j\} \cup \basis{j}^d
        \end{align*}
        Substituting these into the definition of $\basis{p}$
        \begin{align*}
            \basis{p} &= \basis{p}^a \cup \{e_p\} \cup \basis{p}^d \\
            &\supset \basis{j}^a \cup (\{e_j\} \cup \basis{j}^d) \\
            &\supset \basis{j}
        \end{align*}
        Thus, $\basis{j} \subset \basis{p}$, which implies $\subspace{j} \subset \subspace{p}$.
    \end{proof}

\begin{proposition} \label{proposition:consistency}
    The label hierarchy predicted using Hier-COS is consistent across all the levels, i.e.,  $\{\hat{y}^{(1)}, \ldots, \hat{y}^{(l)}, \ldots, \hat{y}^{(H)}\}$ is a valid path in the tree.
\end{proposition}

\begin{proof}
    Hierarchical consistency requires that for any predicted node $\hat{y}^{(l)}$ at level $l > 1$, its predicted parent $\hat{y}^{(l-1)}$ must be its true parent, i.e., $\hat{y}^{(l-1)} = \parent{\hat{y}^{(l)}}$.
    
    The prediction at any level $l$ is the node $v_j \in \nodes_\tree^{(l)}$ that minimizes the distance $D_j^2(\vect{x})$:
    \begin{equation} \label{eq:prop1-pred}
        \hat{y}^{(l)} = \arg \min_{v_j \in \nodes_\tree^{(l)}} D_j^2(\vect{x})
    \end{equation}
    
    We prove consistency by showing that for an ideal feature vector $\vect{x}$ corresponding to a single leaf class $\hat{y}_{leaf}$, the distance to any true ancestor is 0, while the distance to any non-ancestor is strictly positive.
    
    Let $\hat{y}_{leaf}$ be the predicted leaf node. In an ideal setting, $\vect{x} \in \subspace{\hat{y}_{leaf}}$. This implies $\ip{\vect{x}}{e_k} = 0, \forall e_k \in \neg\basis{\hat{y}_{leaf}}$. Let $\hat{y}^{(l)}$ be any true ancestor of $\hat{y}_{leaf}$ at level $l$.
    By recursive application of Lemma \ref{lemma:parent-child-dist}, we have $\subspace{\hat{y}_{leaf}} \subset \subspace{\hat{y}^{(l)}}$.
    Since $\vect{x} \in \subspace{\hat{y}_{leaf}}$, it follows that $\vect{x} \in \subspace{\hat{y}^{(l)}}$.
    Therefore, the distance from $\vect{x}$ to any of its true ancestors is zero:
    \begin{equation} \label{eq:prop1-ancestor-dist}
        D_{\hat{y}^{(l)}}^2(\vect{x}) = \norm{\vect{x} - \proj{\hat{y}^{(l)}}(\vect{x})} = 0
    \end{equation}
    
    Now, let $v_q$ be any \textit{other} node at level $l$ (i.e., $v_q \neq \hat{y}^{(l)}$).
    Since $v_q$ is at the same level as $\hat{y}^{(l)}$ but is not the same node, $v_q$ cannot be an ancestor of $\hat{y}_{leaf}$.
    By the definition of the subspace $\subspace{q}$, its basis $\basis{q}$ consists of $\basis{q}^a \cup \{e_q\} \cup \basis{q}^d$.
    Since $v_q$ is not an ancestor of $\hat{y}_{leaf}$, $e_{\hat{y}_{leaf}} \notin \basis{q}^a \cup \{e_q\}$.
    Since $\hat{y}_{leaf}$ is not a descendant of $v_q$, $e_{\hat{y}_{leaf}} \notin \basis{q}^d$.
    Therefore, the basis vector $e_{\hat{y}_{leaf}}$ is not in $\basis{q}$:
    \begin{equation*}
        e_{\hat{y}_{leaf}} \in \neg\basis{q}
    \end{equation*}
    
    Now consider the distance to this ``incorrect'' node $v_q$:
    \begin{equation*}
        D_q^2(\vect{x}) = \sum_{e_k \in \neg\basis{q}} \ip{\vect{x}}{e_k}^2
    \end{equation*}
    This sum includes the term $\ip{\vect{x}}{e_{\hat{y}_{leaf}}}^2$.
    The condition in Theorem \ref{theorem:hiercos} is designed to ensure the feature vector $\vect{x}$ has non-zero energy on its leaf node, i.e., $\ip{\vect{x}}{e_{\hat{y}_{leaf}}}^2 > 0$.
    
    Since $D_q^2(\vect{x})$ is a sum of non-negative terms (squared inner products) and includes at least one strictly positive term ($\ip{\vect{x}}{e_{\hat{y}_{leaf}}}^2$), we have:
    \begin{equation} \label{eq:prop1-other-dist}
        D_q^2(\vect{x}) \ge \ip{\vect{x}}{e_{\hat{y}_{leaf}}}^2 > 0
    \end{equation}
    
    Comparing Eq. \ref{eq:prop1-ancestor-dist} and \ref{eq:prop1-other-dist}, at any level $l$, the distance to the true ancestor $\hat{y}^{(l)}$ is 0, while the distance to any other node $v_q$ is strictly positive.
    
    Therefore, the prediction at level $l$ must be the true ancestor:
    \begin{equation*}
        \hat{y}^{(l)} = \arg \min_{v_j \in \nodes_\tree^{(l)}} D_j^2(\vect{x}) = \hat{y}^{(l)}
    \end{equation*}
    This holds for all levels, and thus $\hat{y}^{(l-1)} = \parent{\hat{y}^{(l)}}$ for all $l$. The predicted path is guaranteed to be consistent.

\end{proof}

\section{Properties of Hier-COS} \label{sec:properties-hiercos}
\begin{enumerate}
    \item \textbf{Isometry between the subspace distance and LCA-based tree distance:} The sorted order of pairwise distances between the subspaces $V_{y_i}$ and $V_{y_j} ~\forall j \in [K]$, as defined in Eq. \ref{eq:proof1-d-j-def}, results in an order which is exactly the same as the partial preference order obtained using the LCA-based tree distance metric. Intuitively, with the increase in the number of common ancestors between any two leaf nodes, the overlap of the subspaces also increases, thereby reducing their distance. Therefore, feature vectors in Hier-COS are inherently consistent with the semantic similarity obtained from the tree.
    \item \textbf{Isometry between Grassmannian distance and LCA-based tree distance:} For any two distinct leaf nodes $y_i$ and $y_j$, the squared Grassmannian distance between their subspaces is directly proportional to the LCA-based tree distance between them in the hierarchy tree $\mathcal{T}$. Intuitively, with orthogonal basis vectors as in Hier-COS, the principal angles between any two subspaces can either be exactly zero (common ancestors) or $\pi/2$ (distinct nodes). Since the Grassmannian distance between two subspaces is directly proportional to the sum of their principal angles, therefore, it is directly proportional to the LCA-based tree distance.
    \item \textbf{Hierarchy-Adaptive Capacity:} The dimensionality of the subspace $V_y$ adapts to the semantic complexity of the class $y$. Superclasses have higher dimensionality to capture diverse features, while fine-grained classes have constrained dimensionality for specificity. $\basis{d}$ for a superclass provides the necessary capacity to represent the union of features from all diverse sub-categories (e.g., $V_{Animal}$ must accommodate features of both birds and fish). Whereas $\basis{a}$ acts as a constraint, forcing the model to learn a specific representation for leaf class that is a refinement of its ancestors, without the excess degrees of freedom found in superclasses.
    \item \textbf{Unified Classifier for Hierarchy-Aware and Hierarchical Multi-Label Classification:} The distance of the feature vector $\mathbf{x} \in V_\mathcal{T}$ can be computed from any subspace $V_i \subset V_\mathcal{T}$. This property allows computing the pairwise distance of $\mathbf{x}$ from all the subspaces corresponding to each node at a particular height $h$, therefore, encouraging predicting a class label for each hierarchical level $h \in [H]$. This enables hierarchical multi-label classification.
\end{enumerate}
\section{Experimental Setup} \label{supp:expt-setup}
\textbf{Dataset:} We evaluate our approach on the test set of four datasets: FGVC-Aircraft \cite{fgvc-aircraft}, CIFAR-100 \cite{krizhevsky_2009_CIFAR100}, iNaturalist-19 \cite{van_inat_2018_CVPR} and tieredImageNet-H \cite{ren_tiered_18_ICLR}. For FGVC-Aircraft, we adopt the original hierarchy provided by the dataset. For CIFAR-100, we use the hierarchy tree provided by \cite{landrieu_2021_BMVC}, while for iNaturalist-19 and tieredImageNet-H, we use the hierarchies provided by \cite{bertinetto_2020_CVPR}. The statistics of these datasets are summarized in Table \ref{tab:dataset_statistics}.
\begin{table}[h]
\scriptsize
    \begin{center}
    \begin{tabular}{l|c|c|c|c|c|c}
        \toprule
        Dataset & $H$ & $K$ & $n$ & Train & Val & Test \\ \midrule
        FGVC-Aircraft &  3& 100 & 200
&3.3K &3.3K & 3.3K \\ 
        CIFAR-100 &  5& 100 & 134
&45K & 5K & 10K \\ 
        iNaturalist-19 &  7& 1010 & 1189
&187K & 40K & 40K \\ 
        tieredImageNet-H&  12& 608 & 842
&425K & 15K & 15K \\ \bottomrule
    \end{tabular}
    \end{center} 
    \caption{Statistics of the datasets while excluding the root node. $H$ is the height of the hierarchy tree,  $K$ is the number of fine-grained classes and $n$ is the total number of nodes in the tree. The right-most three columns represent the approximate number of samples in train, val and test sets, respectively.} 
    \label{tab:dataset_statistics}	
\end{table}

\noindent \textbf{Baseline Methods:} 
We compare our method with \cite{barz_2019_WACV, redmon_2017_yolo_CVPR, bertinetto_2020_CVPR, flamingo_2021_Chang, baseline-hafeat, baseline-Liang_2023-haframe} and a \textit{flat} cross-entropy-based classification approach. Following \cite{baseline-Liang_2023-haframe}, we also include baselines without the transformation layer (I) and with the transformation layer (II). For a fair comparison with the best competing methods, we re-did the experiments using the official codebase provided by the authors. For methods denoted with an asterisk (*), we simply use the metrics reported by \cite{baseline-hafeat}. Among all the baseline methods, \cite{flamingo_2021_Chang, baseline-hafeat} train an additional classifier for each level of granularity. We compare our results with them to evaluate hierarchical consistency. Although cross entropy and \cite{baseline-Liang_2023-haframe} do not explicitly allow classification across all the hierarchical levels, we compute the probabilities for a super-class as the sum of probabilities of all its children to compare the FPA on CIFAR-100 \cite{baseline-hafeat}. None of the previous methods evaluate the hierarchical performance of a transformer-based method. We create a simple baseline where we only fine-tune the output layer of a pre-trained ViT using the cross-entropy loss.

\noindent \textbf{Backbone Architectures:} Following the same strategy as \cite{baseline-Liang_2023-haframe, baseline-hafeat}, we use WideResNet architecture for CIFAR-100 and ResNet-50 for FGVC-Aircraft, iNaturalist-19 and tieredImageNet-H. In addition to the CNN-based feature extractors, we compare our method using a transformer-based backbone. We use a pre-trained ViT-MAE for iNaturalist-19
and a ViT-B16 for tiered-ImageNet-H datasets 
as the feature extractor and freeze the weights, i.e., we only learn a transformation map from the learned embedding space to Hier-COS.

\noindent \textbf{Evaluation Metrics:} We follow the same evaluation strategy as \cite{bertinetto_2020_CVPR, karthik_2021_ICLR, baseline-hafeat, baseline-Liang_2023-haframe} and report the top-1 accuracy, MS and AHD@\{1, 5, 20\} for all the baseline methods. Similar to \cite{baseline-Liang_2023-haframe}, we report the mean and 95\% confidence interval derived from the t-distribution with four degrees of freedom for five different seeds. Additionally, we report the proposed HOPS, HOPS@\{5, 20\} for the competing methods. To quantify hierarchical consistency, we also report the full-path accuracy (FPA) \cite{park2024learninghierarchicalsemanticclassification} for FGVC-Aircraft, CIFAR-100 and iNaturalist-19. We do not report FPA for tieredImageNet-H because the leaf classes in the corresponding hierarchy tree are not at the same level. Unlike previous methods, we do not artificially extend the leaf nodes to be at the same level, therefore, classes at levels $l \in [H]$ are not well defined. 

\begin{table*}[!htp]\centering
\scriptsize
\begin{tabular}{l|c|c|c|c|c|c|c|c}\toprule
\textbf{Method} & \textbf{Accuracy $(\uparrow)$} & \textbf{MS $(\downarrow)$} & \textbf{AHD@1 $(\downarrow)$} & \textbf{AHD@5 $(\downarrow)$} & \textbf{AHD@20 $(\downarrow)$} & \textbf{HOPS $(\uparrow)$} &\textbf{HOPS@5 $(\uparrow)$} &\textbf{HOPS@20 $(\uparrow)$} \\\midrule
Cross Entropy &\cellcolor[HTML]{b8d8ab}73.63 $\pm$ 0.312&\cellcolor[HTML]{fcfdfc}6.95 $\pm$ 0.021&\cellcolor[HTML]{b9d8ab}1.83 $\pm$ 0.023&5.66 $\pm$ 0.008
&7.29 $\pm$ 0.010
&0.58 $\pm$ 0.001&0.23 $\pm$ 0.003&0.14 $\pm$ 0.002
\\
Barz \& Denzler* &60.27 $\pm$ 0.240 
&\cellcolor[HTML]{daebd3}6.80 $\pm$ 0.019 &2.70 $\pm$ 0.022 
&5.48 $\pm$ 0.271 &\cellcolor[HTML]{cee4c5} 6.21 $\pm$ 0.005&-&
-&
-
\\
YOLO-v2* &66.02 $\pm$ 0.099 &6.99 $\pm$ 0.011 &2.38 $\pm$ 0.012 &\cellcolor[HTML]{dbebd4}5.05 $\pm$ 0.001 &\cellcolor[HTML]{f5faf4}6.17 $\pm$ 0.001&-&

-&
-
\\
HXE+CRM* &\cellcolor[HTML]{bbdaad}73.54 $\pm$ 0.150 &\cellcolor[HTML]{e0eeda}6.89 $\pm$ 0.027&\cellcolor[HTML]{b6d7a8}1.82 $\pm$ 0.016&\cellcolor[HTML]{b6d7a8}4.82 $\pm$ 0.006&\cellcolor[HTML]{b6d7a8}6.03 $\pm$ 0.004&-&
-&
-
\\
Soft-labels ($\beta$=30)* &69.31 $\pm$ 0.125 
&6.99 $\pm$ 0.007 
&2.15 $\pm$ 0.008 
&\cellcolor[HTML]{cbe2c1}4.95 $\pm$ 0.001 &\cellcolor[HTML]{daebd3}6.11 $\pm$ 0.001&-&

-&
-
\\
Soft-labels ($\beta$=4)* &17.28 $\pm$ 0.079 &7.54 $\pm$ 0.001 
&6.24 $\pm$ 0.005 &6.94 $\pm$ 0.005 
&7.25 $\pm$ 0.002
&-&
-&
-
\\
Flamingo-I &\cellcolor[HTML]{daebd3}72.27 $\pm$ 0.209&6.96 $\pm$ 0.016
&\cellcolor[HTML]{daead3}1.93 $\pm$ 0.017&5.77 $\pm$ 0.010
&7.42 $\pm$ 0.012
&0.58 $\pm$ 0.001&

0.22 $\pm$ 0.001&0.13 $\pm$ 0.002
\\
Flamingo-II &65.72 $\pm$ 1.551
&7.07 $\pm$ 0.036
&2.42 $\pm$ 0.122
&5.79 $\pm$ 0.023
&7.33 $\pm$ 0.018
&0.58 $\pm$ 0.001&0.22 $\pm$ 0.004&0.14 $\pm$ 0.002
\\
HAFeat-I &\cellcolor[HTML]{b6d7a8}73.49 $\pm$ 0.218
&6.92 $\pm$ 0.027
&\cellcolor[HTML]{b6d7a8}1.83 $\pm$ 0.016
&5.55 $\pm$ 0.020
&6.98 $\pm$ 0.017
&  0.65 $\pm$ 0.003&

 0.28 $\pm$ 0.002& 0.21 $\pm$ 0.003
\\
HAFeat-II &67.89 $\pm$ 1.772&7.05 $\pm$ 0.016&2.26 $\pm$ 0.128&5.62 $\pm$ 0.036&6.97 $\pm$ 0.015&  0.65 $\pm$ 0.004&0.26 $\pm$ 0.011&0.20 $\pm$ 0.007\\
HAFrame &\cellcolor[HTML]{b6d7a8}73.70 $\pm$ 0.284&\cellcolor[HTML]{e5f0e0}6.90 $\pm$ 0.007&\cellcolor[HTML]{b6d7a8}1.82 $\pm$ 0.019&\cellcolor[HTML]{cce3c3}4.96 $\pm$ 0.013&\cellcolor[HTML]{f1f7ee}6.16 $\pm$ 0.009&\cellcolor[HTML]{bfdbb2}0.87 $\pm$ 0.003&

\cellcolor[HTML]{bfdbb2}0.56 $\pm$ 0.008&\cellcolor[HTML]{bfdbb2}0.60 $\pm$ 0.009\\
Hier-CoS &\cellcolor[HTML]{daebd3}72.22 $\pm$ 0.211&\cellcolor[HTML]{b6d7a8}6.70 $\pm$ 0.009&\cellcolor[HTML]{b6d7a8}1.86 $\pm$ 0.012&\cellcolor[HTML]{b7d7a9}4.81 $\pm$ 0.006&\cellcolor[HTML]{bfdbb2}6.02 $\pm$ 0.002&\cellcolor[HTML]{bfdbb2}0.94 $\pm$ 0.001&\cellcolor[HTML]{bfdbb2}0.71 $\pm$ 0.003&\cellcolor[HTML]{bfdbb2}0.76 $\pm$ 0.002\\ 
\midrule
ViT-Cross Entropy&\cellcolor[HTML]{b6d7a8}75.10 $\pm$ 0.079 &6.77 $\pm$ 0.015 &1.69 $\pm$ 0.005 &5.44 $\pm$ 0.002 &7.00 $\pm$ 0.003 &  0.62 $\pm$ 0.000&0.31 $\pm$ 0.001&0.21 $\pm$ 0.001\\
ViT-Hier-CoS &74.71 $\pm$ 0.082&\cellcolor[HTML]{b6d7a8} 6.60 $\pm$ 0.023 &\cellcolor[HTML]{b6d7a8} 1.67 $\pm$ 0.009 &\cellcolor[HTML]{b6d7a8} 4.82 $\pm$ 0.007 &\cellcolor[HTML]{b6d7a8} 6.07 $\pm$ 0.003 &  \cellcolor[HTML]{bfdbb2}0.92 $\pm$ 0.001&\cellcolor[HTML]{bfdbb2}0.65 $\pm$ 0.004&\cellcolor[HTML]{bfdbb2}0.70 $\pm$ 0.004\\
\bottomrule
\end{tabular}
\caption{Results comparing the performance of hierarchical classification on the test set of tieredImageNet-H.}\label{tab:hfgvc-tiered}
\end{table*}

\subsection{Implementation Details}
Inspired by HAFrame \cite{baseline-Liang_2023-haframe}, the transformation module comprises of 5 linear layers, each with hidden units same as the number of nodes $n$, along with batch-norm layer and PReLU activation feeding to a fully connected layer with fixed weights $\{e_1 \ldots, e_n\}$ (Fig. \ref{fig:f-theta}). A small distinction from \cite{baseline-Liang_2023-haframe} is that we define the transformation module $f_\theta$ as the Transformation + Linear module of HAFrame. We re-emphasize that the weight vectors (equivalently, the basis set $\mathcal{E}$) can be arbitrary but are always orthonormal and fixed for the fifth linear layer.

\begin{figure}[!ht]
    \centering
    \includegraphics[trim={3cm 0 3cm 0},clip,width=\linewidth]{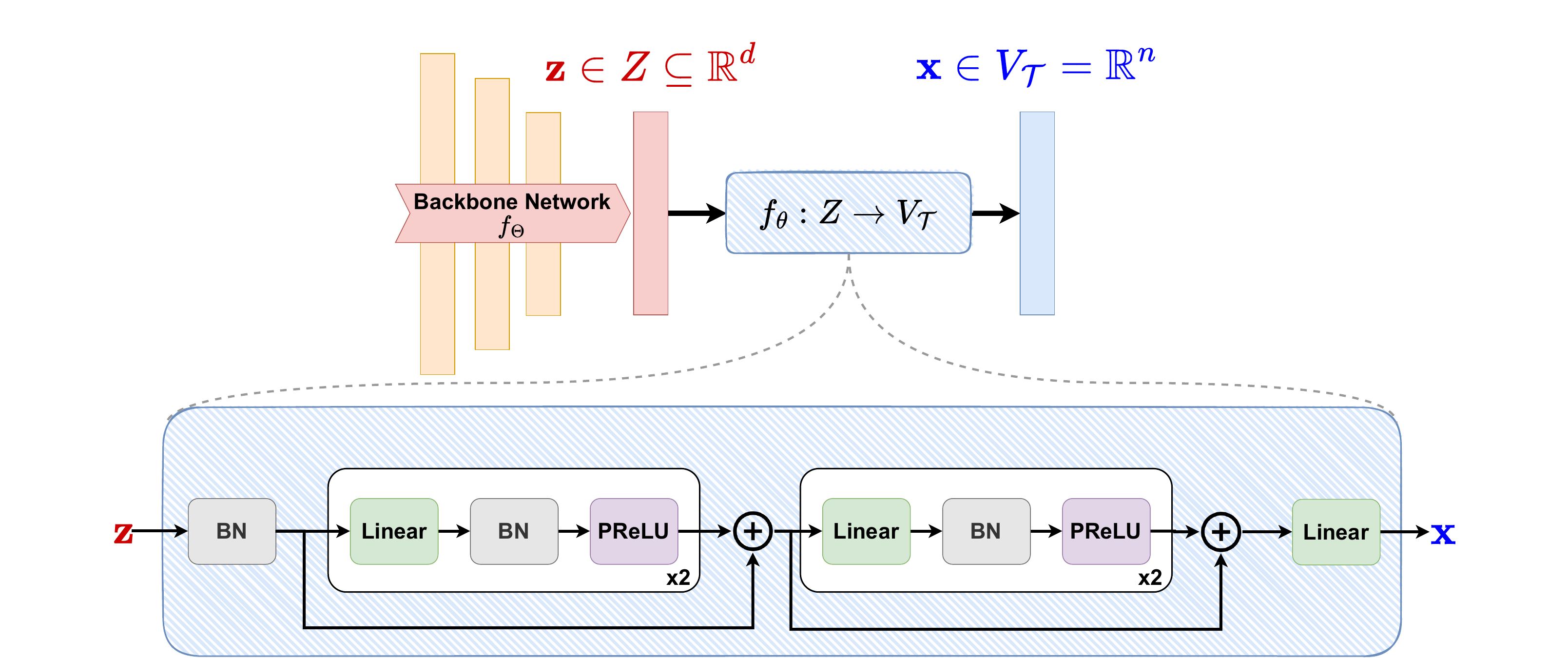}
    \caption{Illustration of the transformation module $f_\theta:Z\rightarrow V_\calT$. The components of $f_\theta$ are inspired from HAFrame \cite{baseline-Liang_2023-haframe}}
    \label{fig:f-theta}
\end{figure}

\subsection{Training Configuration}
We use the same training configurations as \cite{baseline-Liang_2023-haframe}. We obtained the best results on FGVC-Aircraft, CIFAR-100, iNaturalist-19 and tieredImageNet-H using $\alpha$ as $(0.1, 0.05, 0.001, 0.0001)$, respectively. To compare our method using a transformer-based backbone, we freeze the weights of the backbone and only fine-tune $f_\theta$ for $10$ epochs with $\alpha=1e-4$.

\section{Additional Experiments and Results} \label{supp:expt}

In this section, we include the results on tieredImageNet-H in Table \ref{tab:hfgvc-tiered}. Here, we observe a reduction of $1.48\%$ in the top-1 accuracy as compared to SOTA, while we observe significant improvements in MS, AHD@$\{5, 20\}$, HOPS and HOPS@$\{5,20\}$. AHD@1 is slightly reduced because of the degraded top-1 performance. We emphasize that tieredImageNet has a complex hierarchical structure (visualized in \textit{\texttt{Results and Analysis.pdf}}) with leaf nodes at different levels of the hierarchy. This makes learning algorithms difficult to optimize. This is also evident in SOTA. We observe that while HAFrame improves the top-1 performance, its overall performance is not better than the baseline HXE+CRM. The AHD@$k$ metrics for HXE+CRM are significantly better than HAFrame, while we show that with a slight reduction in top-1 performance, Hier-COS is able to achieve better performance across all the hierarchical metrics.

{\color{blue} Fig. 1} from the main paper, depicts the improvement in hierarchical performance. We observe a large drop in the percentage of samples with correct prediction orders using all the previous methods for all $k>1$ in {\color{blue} Fig. 1}. Although the drop in accuracy is not justified because, unlike \cite{bertinetto_2020_CVPR}, we demonstrated via other experiments that there is no trade-off between top-1 and hierarchical performance, we speculate that this occurs due to the design of a simple loss function that does not account for the imbalance in the height of subtrees. Specifically, each class in tieredImageNet-H spans a different number of dimensions, while all the classes in other datasets span exactly $H$ dimensions. We hypothesize that accounting for the dimensionality of subspaces should also improve the top-1 accuracy.

\subsection{Ablative Study} \label{sec:ablation}
\begin{figure*}
    \centering
    \begin{subfigure}[b]{0.33\textwidth}
        \centering
        \includegraphics[width=\linewidth]{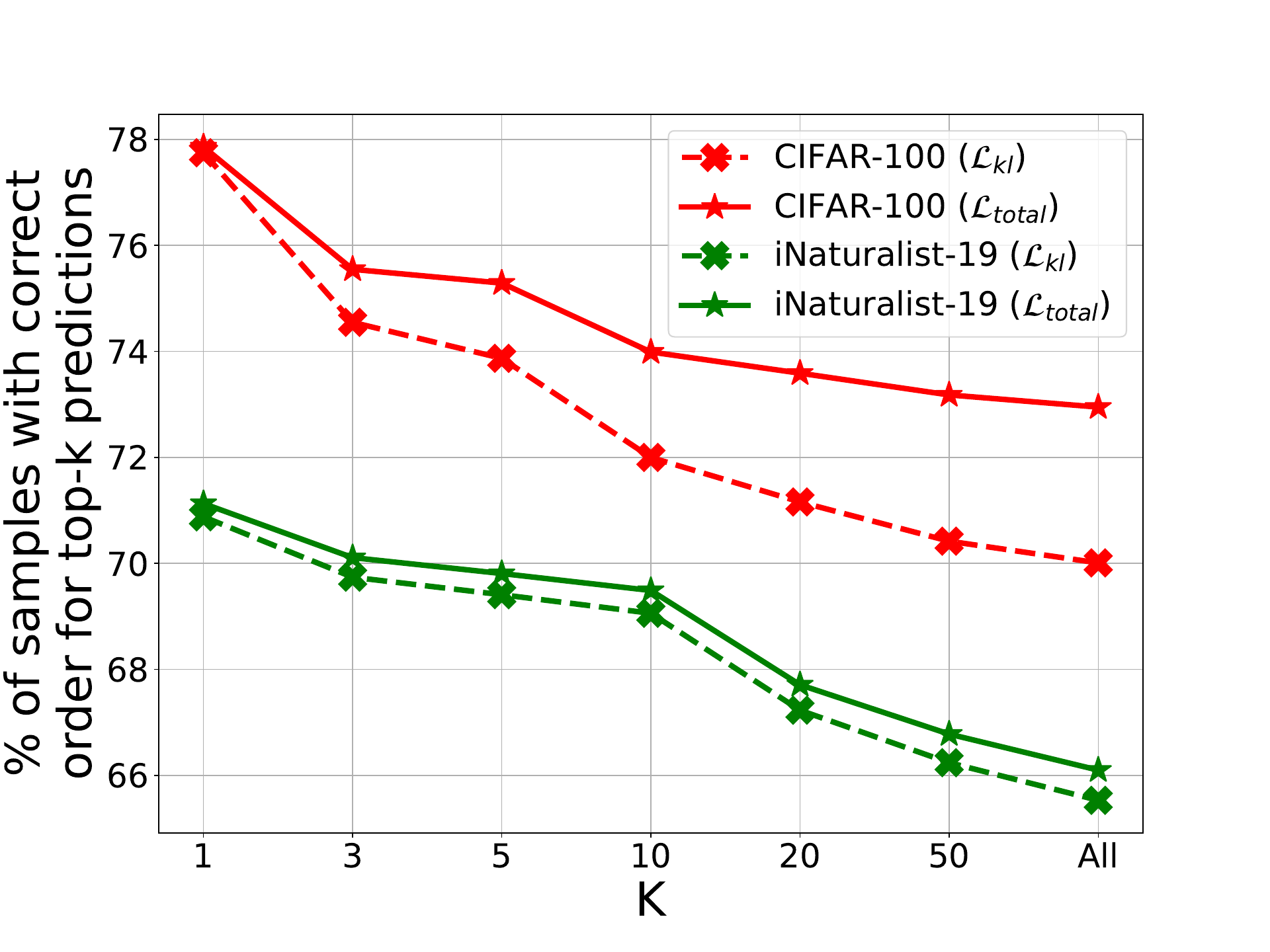}
        \caption{Impact of $\mathcal{L}_{reg}$} \label{fig:ablation-a}
    \end{subfigure}
    \begin{subfigure}[b]{0.33\textwidth}
        \centering
        \includegraphics[width=\linewidth]{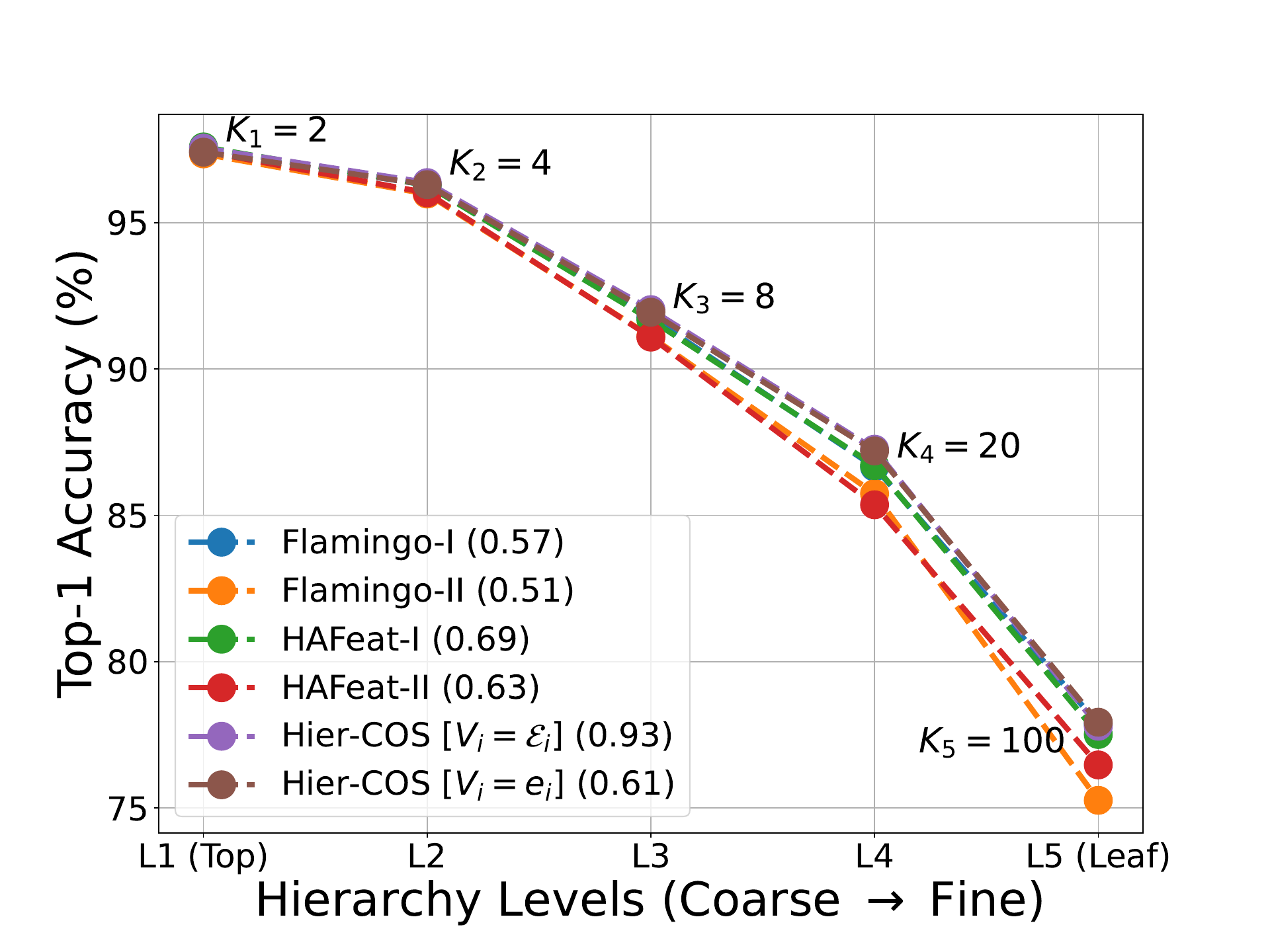}
        \caption{Level-wise accuracy for CIFAR-100}\label{fig:ablation-b}
    \end{subfigure}
    \begin{subfigure}[b]{0.33\textwidth}
        \centering
        \includegraphics[width=\linewidth]{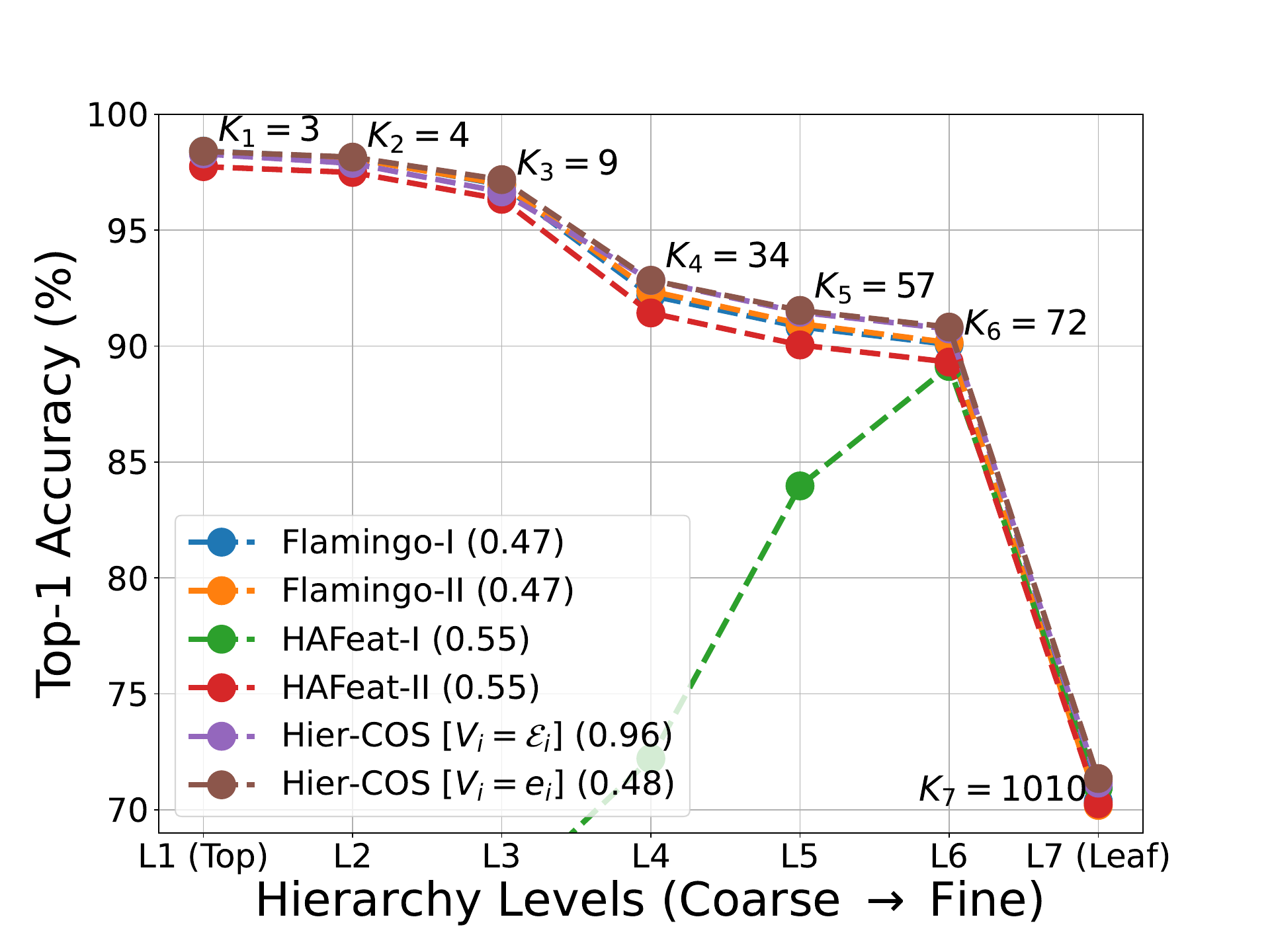}
        \caption{Level-wise accuracy for iNaturalist-19} \label{fig:ablation-c}
    \end{subfigure}
    \caption{Ablative analysis examining (a) the impact of the regularization term and (b, c) the effect of adding extra dimensions to $V_i$. HOPS metric is mentioned in parenthesis in (b, c).}
    \label{fig:ablation}
\end{figure*}

In this section, we examine the (i) impact of the regularization term (Fig. \ref{fig:ablation-a}), (ii) the effect of adding extra dimensions to the vector space (Fig. \ref{fig:ablation-b} and \ref{fig:ablation-c}), (iii) the influence of $\alpha$ on the hierarchical performance (Table \ref{tab:ablation}), and (iv) the influuence of $w_l$ on the classification performance (Table \ref{tab:ablation-wl}. 
In Fig. \ref{fig:ablation-a}, we plot the percentage of samples with correct order for the top-$k$ predictions against $k$. We observe that when $\mathcal{L}_{reg}$ is not used (dotted line), the hierarchical performance reduces with $k$, however, it is still better than the previous methods shown in {\color{blue} Fig. 1} of the main paper. Including $\mathcal{L}_{reg}$ minimizes the norm of the projection of a feature vector corresponding to a class $y_i$ onto its complementary subspace $V_i^\perp = \text{span}(\mathcal{E} \setminus \mathcal{E}_{y_i})$. Therefore, we would expect the feature vector to be closer to the subspace corresponding to the correct class $y_i$. We validate this on CIFAR-100 by calculating the cosine similarity between $\mathbf{x}$ and $\mathbb{P}_{\mathcal{E}_{y_i}} \mathbf{x}$. The cosine similarity obtained with and without $\mathcal{L}_{reg}$ is 0.97 and 0.87, respectively.

In Fig. \ref{fig:ablation-b} and \ref{fig:ablation-c}, we observe that when no additional dimensions are added to the vector space, i.e., $V_i = \text{span}(\{e_i\})$, the level-wise accuracy is competitive with the proposed Hier-COS framework (i.e., $V_i = \text{span}(\mathcal{E}_i)$); however, the HOPS metric is deteriorated significantly. This suggests that extra dimensions help in learning diverse and discriminative features across all the hierarchical levels.
Further, we emphasize that, unlike \cite{flamingo_2021_Chang, baseline-hafeat}, a single classifier is used to predict classes across all the levels. This demonstrates that learned feature representations are generalized across all the hierarchical levels. We discuss this property further in Section \ref{supp:discussion}.

We evaluate the performance of Hier-COS in Table \ref{tab:ablation} with different values of $\alpha$ on CIFAR-100. We observe that the performance is not too sensitive to $\alpha$ and maintains the same hierarchical performance across all the values with slight variations in top-1 accuracy.

\begin{table}[!htp]\centering
\resizebox{\linewidth}{!}{
    \begin{tabular}{l|c|c|c|c|c}\toprule
    \textbf{$\alpha$} &\textbf{Accuracy} &\textbf{MS} &\textbf{AHD@5} &\textbf{AHD@20}  &\textbf{HOPS}\\\midrule
    \textbf{0} &77.51 $\pm$ 0.288	&2.21 $\pm$ 0.026	&1.10 $\pm$ 0.005	&2.17 $\pm$ 0.002  &0.93 $\pm$ 0.001\\
    \textbf{5e-2} &77.79 $\pm$ 0.145 &2.21 $\pm$ 0.017 &1.09 $\pm$ 0.005 &2.17 $\pm$ 0.003  &0.93 $\pm$ 0.001
    \\
    \textbf{1e-2} &77.39 $\pm$ 0.231 &2.21 $\pm$ 0.049 &1.10 $\pm$ 0.010 &2.18 $\pm$ 0.004  &0.93 $\pm$ 0.002
    \\
    \textbf{5e-3} &77.43 $\pm$ 0.072 &2.21 $\pm$ 0.008 &1.10 $\pm$ 0.002 &2.18 $\pm$ 0.002  &0.93 $\pm$ 0.001
    \\
    \textbf{1e-3} &77.48 $\pm$ 0.371 &2.22 $\pm$ 0.020 &1.10 $\pm$ 0.006 &2.18 $\pm$ 0.004  &
    0.93 $\pm$ 0.002\\
    \textbf{1e-4} &77.41 $\pm$ 0.217 &2.23 $\pm$ 0.019 &1.11 $\pm$ 0.005 &2.18 $\pm$ 0.004  &0.93 $\pm$ 0.001\\
    \bottomrule
    \end{tabular}
}
\caption{Sensitivity Analysis of $\alpha$ on CIFAR-100}\label{tab:ablation}
\end{table}

Finally, in Table \ref{tab:ablation-wl}, we analyze the impact of distributing the weights $w_l$ such that (i) $w_l > w_{l+1}$, i.e., projection norm is concentrated towards the coarser classes, (ii) $w_l = w_{l+1}$, i.e., projection norm is concentrated equally across all the levels, and (iii) $w_l < w_{l+1}$, i.e., projection norm is concentrated towards the finer classes. To obtain weights $w_l > w_{l+1}$, we simply reverse the order of the level-wise weights obtained using the equation mentioned in the main paper, i.e., $w_l = \text{exp}\bigg(\frac{1}{h+1-l}\bigg)$. Obtaining the uniform weights for all the levels is straightforward and done using $w_l = \frac{1}{h}$. Empirical evidence supports our discussion in \textcolor{blue}{Sec. 3.3} of the main paper, that the leaf classes become indistinguishable as the concentration of the projection norm becomes high towards the coarser classes. This also justifies our design choice of a monotonically increasing weight function as we move from the basis vectors of the root node to the leaf. 

\begin{table}[!htp]\centering
\resizebox{\linewidth}{!}{
    \begin{tabular}{c|c|c|c|c|c}\toprule
    \textbf{Weights} &\textbf{Accuracy} &\textbf{MS} &\textbf{AHD@1} &\textbf{AHD@5} &\textbf{AHD@20} \\\midrule
    $w_l > w_{l+1}$ &51.08 $\pm$ 19.681 &1.92 $\pm$ 0.070 &0.93 $\pm$ 0.338 &1.29 $\pm$ 0.156 &2.24 $\pm$ 0.056 \\
    $w_l = w_{l+1}$ &70.22 $\pm$ 2.860 &2.08 $\pm$ 0.019 &0.62 $\pm$ 0.055 &1.16 $\pm$ 0.029 &2.19 $\pm$ 0.010 \\
    $w_l < w_{l+1}$ &77.79 $\pm$ 0.145 &2.21 $\pm$ 0.017 &0.49 $\pm$ 0.006 &1.09 $\pm$ 0.005 &2.17 $\pm$ 0.003 \\
    \bottomrule
    \end{tabular}
}
\caption{Impact of the distribution of weights $w_l$ on Top-1 accuracy and other hierarchical metrics}\label{tab:ablation-wl}
\end{table}

\subsection{Qualitative Analysis}
In this section, we demonstrate that our method learns stronger hierarchical feature representations as compared to the previous methods. {\color{blue} Fig. 1} of the main paper, only highlights the overall hierarchical performance of the methods, where even a small mistake would result in an error. For a deeper analysis of the learned representations, for all the samples belonging to a class $y_i$, we visualize the average of the log probabilities for the complete prediction vector comprising of $K$ classes. A higher log probability denotes a higher preference (or a low inter-class distance). We explain these plots using the ground truth LCA-based log probabilities on the test set of CIFAR-100 in Fig. \ref{fig:cifar-100-gt}, representing the best hierarchical performance, and demonstrate the hierarchical performance across all the datasets in Fig. \ref{fig:log-prob}. We observe that Hier-COS is able to learn better hierarchical representations than any of the other baseline methods.

\begin{figure}
    \centering
    \includegraphics[width=0.5\linewidth]{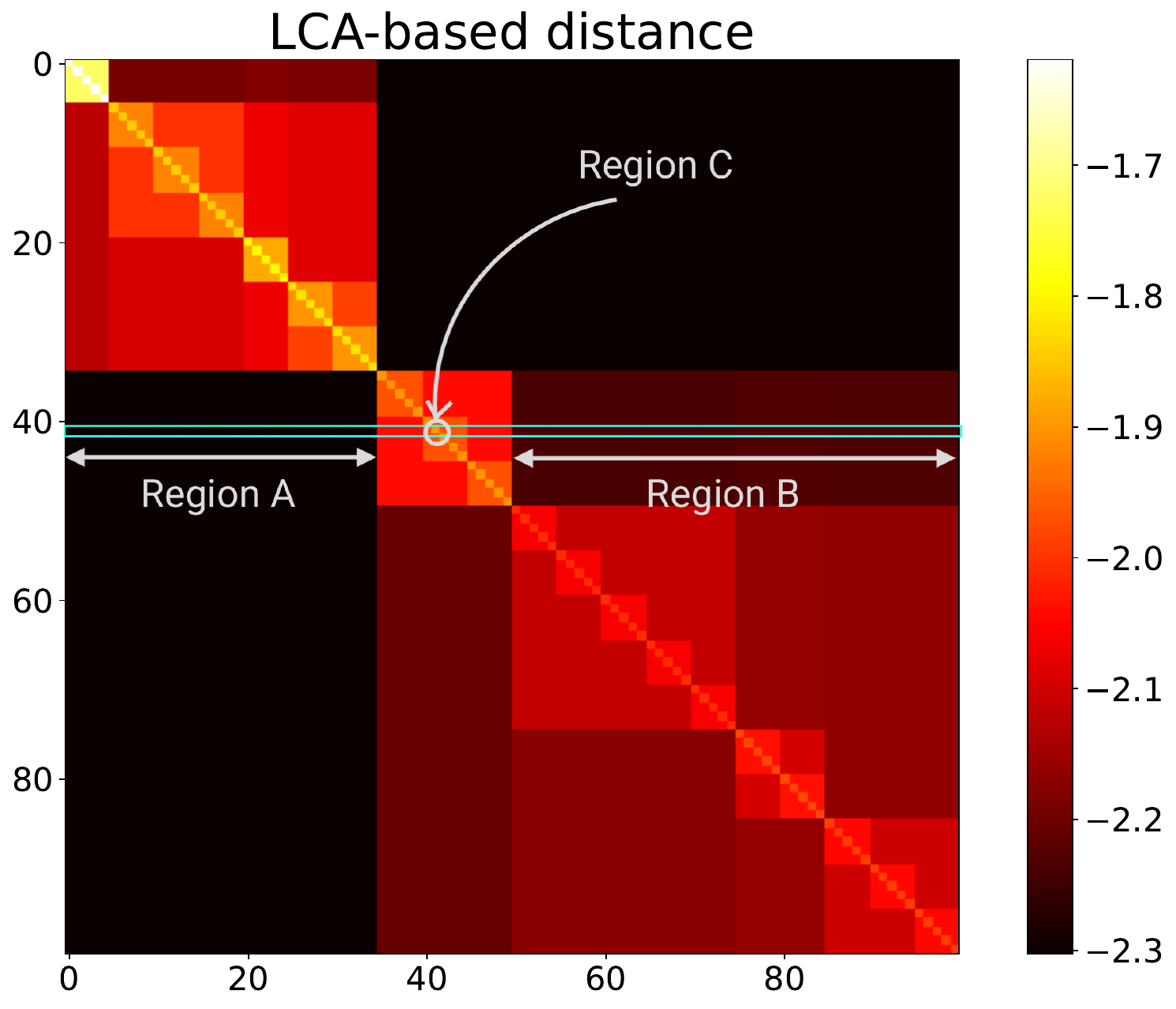}
    \caption{Ground truth LCA-based log probabilities on the test set of CIFAR-100 representing the best hierarchical performance.
    Consider the row highlighted by the rectangular box, say class $y_c$. `Region A' contains the classes farthest from the class $y_c$ because they have the least log probability in that row. Similarly, `Region B' contains all the classes closer to $y_c$ than the classes in `Region A' but are farther than all the other classes that are not in `Region A' and `Region B'. `Region C' contains only one class, i.e., $y_c$, and represents the average log probability of being correct. The color scale of log probabilities helps in understanding the hierarchical preference order. Therefore, for learned hierarchy-aware feature representation, we would expect the log probabilities to follow a similar preference order.
    }
    \label{fig:cifar-100-gt}
\end{figure}

\begin{figure}
    \centering
    \begin{subfigure}[b]{\linewidth}
        \includegraphics[width=\linewidth]{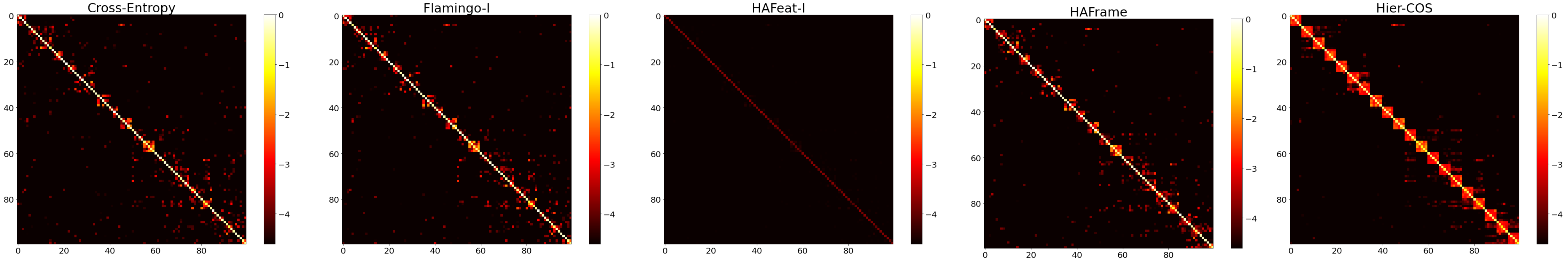}    
        \caption{CIFAR-100}
    \end{subfigure}
    \begin{subfigure}[b]{\linewidth}
        \includegraphics[width=\linewidth]{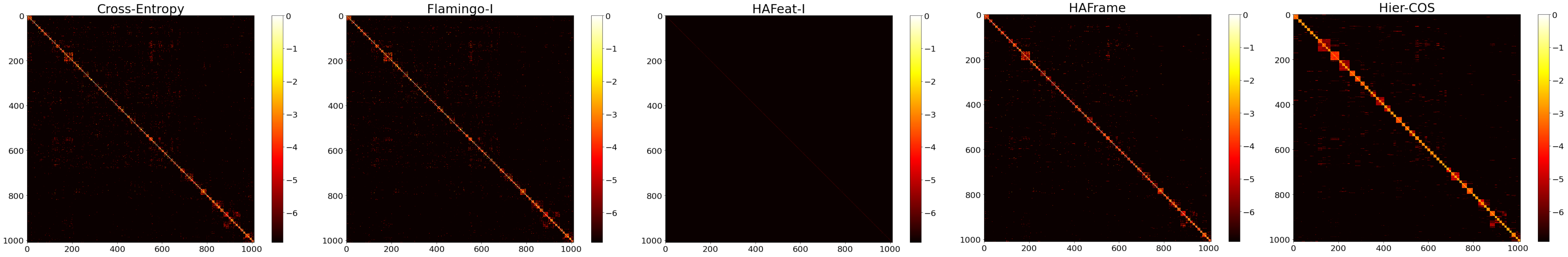}    
        \caption{iNaturalist-19}
    \end{subfigure}
    \begin{subfigure}[b]{\linewidth}
        \includegraphics[width=\linewidth]{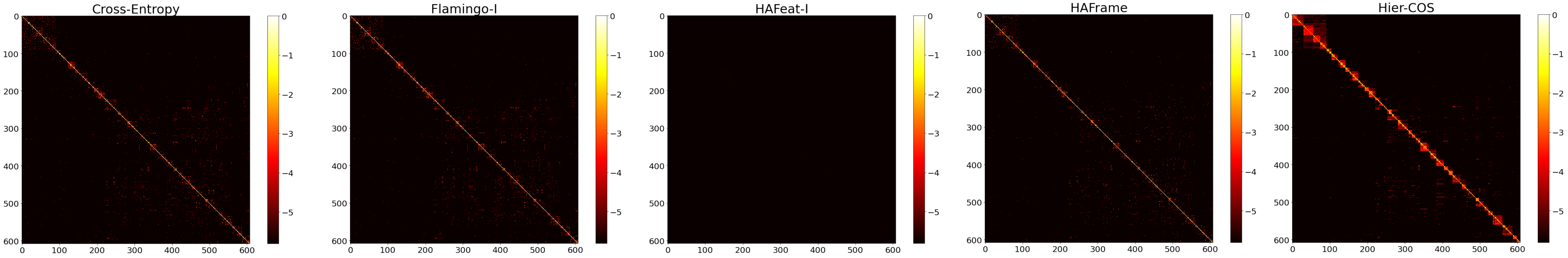}    
        \caption{tieredImageNet-H}
    \end{subfigure}
    \caption{\textbf{(Zoom in for better clarity)} Plots showing the obtained order from different classification methods on the test set of (a) CIFAR-100, (b) iNaturalist-19 and (c) tieredImageNet-H. The rows and columns represent the K classes. Each row contains the softmax-based log probabilities for all the K classes averaged over all the samples belonging to that class. All the classes are grouped according to the LCA distance.}
    \label{fig:log-prob}
\end{figure}

\section{A Review of Evaluation Metrics} \label{supp:review}
This section discusses some of the widely used and related evaluation metrics for hierarchical classification methods. This also includes metrics used for evaluating information retrieval and recommendation systems.

\subsection{Definitions}
\noindent \textbf{Mistake Severity:}
The severity of a mistake is often quantified using the hierarchical distance of a mistake, i.e., the LCA distance between the ground truth and the predicted class when the input is misclassified \cite{bertinetto_2020_CVPR}.

\begin{align*}
    MS = \frac{1}{FP + FN} \sum_{i=1}^{\vert \mathcal{X} \vert} height(LCA(y_i, \widehat{y}_i ))
\end{align*}
where $y_i$ and $\widehat{y}_i$ are the ground truth and the predicted class label at the finest level of granularity for a sample $i$ in the dataset and $\vert \mathcal{X} \vert$ is the total number of samples in the dataset $\mathcal{X}$. The MS is averaged over all misclassifications, i.e., the total number of false positives (FP) and false negatives (FN). It is important to note that the hierarchical distance for correct predictions is zero, and hence, averaging it over only FP and FN makes it equivalent to saying that MS is computed for misclassified samples only. \\

\noindent \textbf{Average Hierarchical Distance @$k$:}
A major drawback of the MS metric was pointed out by \cite{karthik_2021_ICLR} that it only considers the misclassified samples and hence needs to be paired with top-1 accuracy for comparing different methods. A simple extension of MS is to average over all the samples $\vert \mathcal{X} \vert$ instead of only the misclassified ones (FP + FN) \cite{bertinetto_2020_CVPR}.

\begin{align*}
    AHD = \frac{1}{\vert \mathcal{X} \vert} \sum_{i=1}^{\vert \mathcal{X} \vert} height(LCA(y_i, \widehat{y}_i ))
\end{align*}

In the case of hierarchy-aware feature representations, we would expect not only lower MS and AHD values but also smaller LCA distances between the top predictions -- an aspect that neither MS nor AHD adequately captures. Therefore, \cite{bertinetto_2020_CVPR} also proposes to compute the average AHD for all the top-k predictions, defined as: 

\begin{align*}
    AHD@k = \frac{1}{\vert \mathcal{X} \vert} \sum_{i=1}^{\vert \mathcal{X} \vert} \frac{1}{k} \sum_{j=1}^{k} height(LCA(y_i, \widehat{y}_{ij} ))
\end{align*}
where $\widehat{y}_{ij}$ is the top-$j^{th}$ ranked prediction for sample $i$.

\noindent \textbf{Hierarchical Precision and Recall:}
For evaluating hierarchical classification methods, hierarchical precision ($hP$) is used, which is the proportion of correctly predicted classes among all the predicted classes at all the hierarchical levels, 
\begin{align*}
    hP = \frac{1}{\vert \mathcal{X} \vert}\sum_{i=1}^{\vert \mathcal{X} \vert}\frac{\vert \tilde{f_a}(v_{\widehat{y}_i}) \cap \tilde{f_a}(v_{y_i}) \vert }{\vert \tilde{f_a}(v_{\widehat{y}_i)} \vert}
\end{align*}
where $y_i$ and $\widehat{y}_i$ are the ground truth and predicted class labels, respectively, $v_{y_i}$ and $v_{\widehat{y}_i}$ are the corresponding nodes in the tree and $\tilde{f_a}(v_i)$ is the set containing all the ancestors of $v_i$ and itslef, i.e., $\{f_a(v_i) \cup v_i\}$ \cite{Silla_DMKD2011}. 

Similarly, for any sample $i$, hierarchical recall ($hR$) is the proportion of correctly predicted classes among all the ground truth classes at all the hierarchical levels.

\begin{align*}
    hR = \frac{1}{\vert \mathcal{X} \vert}\sum_{i=1}^{\vert \mathcal{X} \vert}\frac{\vert \tilde{f_a}(v_{\widehat{y}_i}) \cap \tilde{f_a}(v_{y_i}) \vert }{\vert \tilde{f_a}(v_{y_i)} \vert}
\end{align*}
In simpler words, we can say that $hR$ is analogous to $hP$ but focuses on how well the true hierarchical structure is retrieved rather than the correctness of the predicted structure. \\

\noindent \textbf{Total Preference Ordering:}
We often deal with information expressed as partial orderings (PO). For PO based on a hierarchy tree, the highest preference is given to the ground truth class, the second highest to all its siblings, the third highest to all its first cousins, and so on, resulting in a total preference ordering (TPO) \cite{dezert2024distancespartialpreferenceorderings}. There could also be other TPOs, such as TPO based on the prediction probabilities.
\cite{dezert2024distancespartialpreferenceorderings} presents a metric to compare TPOs using a pair-wise Preference-Score Matrix (PSM), $M$, for each TPO, defined as:

\begin{align*}
    M(i, j) &= 
    \begin{cases} 
    1, & \text{if } x_i \succ x_j, \\ 
    -1, & \text{if } x_i \prec x_j, \\ 
    0, & \text{if } x_i = x_j.
    \end{cases}
\end{align*}

where $x_i$ and $x_j$ denote the preference scores of a classes $i$ and $j$, respectively, and $M(i,j)$ is the component of PSM representing whether the classes $i$ and $j$ are in the correct preference order.
Finally, \cite{dezert2024distancespartialpreferenceorderings} defines the distance between the two TPOs using the Frobenius distance as follows:
\begin{align*}
    d_F(M_1, M_2) &= \lVert M_1 - M_2\rVert _F \\
    &= \sqrt{\operatorname{Tr}\bigg((M_1 - M_2)^T (M_1 - M_2)\bigg)}
\end{align*}

\noindent \textbf{Mean Reciprocal Ratio and Normalized Rank:}
A popular evaluation metric used in information retrieval and ranking systems is Mean Reciprocal Rank (MRR). We present its definition with respect to our problem setup. As the name suggests, it measures the mean of the reciprocal of the ranks for the correct class. Specifically, rank can be considered as the position or index at which the correct class appears in an ordered list of predictions. The lowest rank, zero, signifies the prediction with the highest confidence. 

\begin{align*}
    MRR = \frac{1}{\vert \mathcal{X} \vert} \sum_{i=1}^{\vert \mathcal{X} \vert} \frac{1}{rank_i}
\end{align*}

Recently, Tian et al. \cite{mnr-ndcg} proposed an alternative to MRR for hierarchical classification called Mean Normalized Rank (MNR). We present its definition with respect to multi-class classification below:
\begin{align*}
    MNR = \frac{1}{\vert \mathcal{X} \vert} \sum_{i=1}^{\vert \mathcal{X} \vert} \bigg( \frac{1}{H} \sum_{l=1}^H \bigg( \frac{rank_i^l}{K_l} \bigg) \bigg)
\end{align*}
In simpler words, MNR is the average rank over all the hierarchical levels normalized by the number of classes at each level.

\noindent \textbf{Normalised Discounted Cumulative Gain:}
Normalized Discounted Cumulative Gain (NDCG) is a standard metric that evaluates the quality of recommendation and information retrieval systems. NDCG measures the ability of a method to sort items based on relevance. The relevance score is inversely proportional to ranks, i.e., the higher relevance score is better. The NDGC@$k$ computes the NDCG for top-$k$ predictions, given for each sample $i$ by:

\begin{align*}
    NDCG_i@k &= \frac{DCG_i@k}{IDCG_i@k} \\
    DCG_i@k &= \sum_{j=1}^k \frac{rel_{i,j}}{log_2(j + 1)}
\end{align*}
where, $IDCG@k$ is the DCG@$k$ for the ideal ranking. Recently, Tian et al. \cite{mnr-ndcg} presented an approach to extend this metric to hierarchical classification, where relevance between classes $i$ and $j$ is defined based on the structure of the hierarchy tree as:

\begin{align*}
    rel_{i,j} = 1 - \frac{d(i, lca(i,j)) + d(j, lca(i,j))}{D_\mathcal{T}}
\end{align*}
where, $i$ is the ground truth class, $j$ is the predicted class, $lca(i,j)$ is the lowest common ancestor between $i$ and $j$, $d(i,j)$ is the number of edges between the nodes $i$ and $j$, and $D_\mathcal{T}$ is the length of the longest path between any two leaf nodes.

\begin{figure*}[!t]
    \centering
    \includegraphics[width=\linewidth]{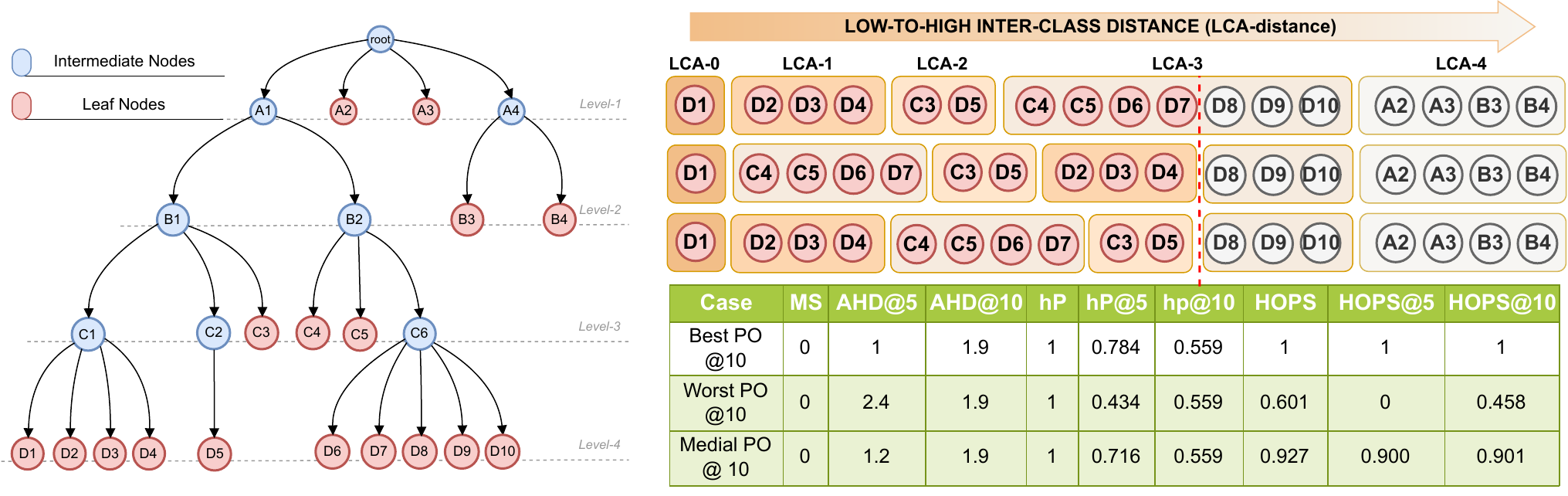}
    \caption{(Left) The same taxonomy $\mathcal{T}$ shown in \textcolor{blue}{Fig. 2} of the main paper. (Right) Three examples with correct class prediction but varying the predicted preference order for top-10 predictions.}
    \label{fig:hops-analysis}
\end{figure*}

\subsection{Limitations}
\noindent \textbf{Mistake Severity:}
We emphasize that MS only considers the average cost of mistakes and is biased towards methods that make \textit{more} mistakes in numbers but less in severity \cite{karthik_2021_ICLR}. Therefore, it can only be compared when coupled with the top-1 accuracy. This is also evident for the `Soft-labels $\beta=4$' and `Barz \& Denzler*' methods in {\color{blue}Tables 2 and 3} of the main paper, respectively. Moreover, the metric is dependent on the properties of the tree and is not normalized. Because the MS is unnormalized, this is often interpreted as the average LCA distance for a mistake. However, for a given class, the lowest MS for misclassification is dependent on the LCA distance of the nearest sibling, which results in a higher MS for imbalanced trees, even for less severe mistakes. For instance, whenever classes $A2$ and $A3$ of the taxonomy $\mathcal{T}$ in Fig \ref{fig:hops-analysis} are misclassified, the MS is expected to be high because most incorrect classes are at a higher LCA distance. Therefore, we discover that MS depends on the leaf node's depth, tree's branching factor and imbalance, and it is challenging to interpret the MS score without due consideration to the tree structure.

\noindent \textbf{AHD@$k$:}
AHD@1 can be viewed as a straightforward extension of MS that does not require pairing with top-1 accuracy. However, as highlighted by \cite{baseline-hafeat}, this metric tends to be biased towards higher top-1 accuracy and fails to effectively measure the severity of mistakes.
Moreover, similar to MS, AHD@$k$ is also unnormalized and suffers from the limitations discussed above. For instance, although the predictions are correct, AHD@\{5, 10\} is non-zero for all the cases in the Fig. \ref{fig:hops-analysis}, which depends on the structure of the tree. Moreover, the average operation is permutation invariant; therefore, any random permutation of the same top-k prediction results in the same AHD@$k$. For instance, AHD@10 for any random permutation of the top-10 predictions is always $1.9$, as shown in Fig. \ref{fig:hops-analysis}. Therefore, we discover that AHD@$k$ also depends on the properties of the tree and is invariant to the order of predictions.

\noindent \textbf{hP and hR:}
Similar to AHD@$k$, we can also use the average hP and hR of the top-k predictions to get hP@$k$ and hR@$k$, respectively. Both hP@$k$ and hR@$k$ can be viewed as alternatives to AHD@$k$ that are normalized. 
Observe that $\vert \tilde{f_a}(v_{\widehat{y}_i}) \cap \tilde{f_a}(v_{y_i}) \vert$ measures the LCA similarity between the ground truth and the predicted classes, i.e., as the number of common nodes increases the similarity increases; therefore, a higher value represents better performance. However, because of their resemblance to AHD@$k$, both possess similar limitations, i.e., both depend on the tree's properties and are permutation invariant. Moreover, for a balanced tree having the leaf nodes at the same level, we know that $\vert \tilde{f_a}(v_{\widehat{y}_i}) \vert = \vert \tilde{f_a}(v_{y_i}) \vert$, therefore, $hP = hR$. In Fig \ref{fig:hops-analysis}, it is evident that hP@$k$ is a normalized version of AHD@$k$.

\noindent \textbf{TPO:}
This metric fundamentally depends on the definition of TPOs and their corresponding PSMs. We can use the hierarchy tree to define a ground truth PSM, say $M_1$. However, determining an order of preferences from the prediction is not well-defined when we have multiple classes at the same preference value, which is almost always the case with hierarchies. Therefore, although \cite{dezert2024distancespartialpreferenceorderings} presents a promising direction to compare the TPOs, it is not yet feasible for hierarchical evaluations. Even if we assume a method exists to determine the order of preferences from the prediction, using this method for hierarchical evaluation is still challenging. The construction of PSM considers all misclassifications equally severe. Moreover, deciding which classes (based on ground truth TPO) are more important when considering a hierarchy with a huge number of leaf nodes is often important.

\noindent \textbf{MRR and MNR:}
MRR only considers the rank of the correct prediction, while ignoring all the negative classes. Therefore, it is analogous to AHD@1 and, hence, suffers from the same problems. Similarly, MNR is also computed over the correct prediction at all levels and has the same limitations. Further, MNR requires a ranking over all the hierarchical levels. Hence, it can not be used for methods that predict for a single level, like Cross-Entropy and HAFrame.

\noindent \textbf{NDCG@$k$: }
The definition of the relevance given by \cite{mnr-ndcg} does not hold for trees that do not have leaf nodes at the same level. According to them, $d(D1, A2) < d(D1, D6)$ in Fig \ref{fig:hops-analysis}, however, we can see that $D1$ is closer to $D6$ than $A2$. Moreover, the discounting factor, $\frac{1}{log_2(j+1)}$ for each class with rank $j$, is independent of the hierarchical structure. Therefore, NDCG@$k$ does not capture the properties of the hierarchy tree. Specifically, for a class that has all the nearest-$k$ neighbors at the same LCA distance, if the predicted class at $(k-1)^\text{th}$ rank has a higher LCA distance, the penalty is decayed significantly $(\frac{1}{log_2(k)})$ without considering that all the top-$k$ predictions are equally relevant.

\section{HOPS: Technical Description \& Example} \label{supp:hops}

Let $\mathcal{U}_{d, y_c}$ be the set of classes at the finest level that are at LCA distance $d$ from $y_c$. Note that $\mathcal{U}_{0, y_c}$ is a singleton set with the only element being $\{y_c\}$, i.e., the correct class. We emphasize that the cost of misclassifying $y_c$ as any of the classes in $\mathcal{U}_{d,y_c}$ is the same, i.e., $d$. Therefore, an order on $\mathcal{U}_{d,y_c}$ can be used to define the preferences of classes where any order within a set $\mathcal{U}_{d,y_c}$ is equally preferred, i.e., the preferred ordering will be $\{\mathcal{U}_{0,y_c}, \cdots, \mathcal{U}_{H,y_c}\}$. In the case when a node only has one child, for some value $d=l$, $\mathcal{U}_{l,y_c}$ will be empty and will be of no significance. 
In order to ignore such empty sets, we use ranks instead of absolute LCA distances to represent the ordering. Specifically, we use an index set $\bbI$ to rank all the non-empty sets in $\{\mathcal{U}_{d, y_c}\}_{d=0}^{d=H}$. We define the index set $\mathbb{I} = \{0,1,2,\ldots,H\}$ for indexing the elements in the ordered set $\mathbb{S}_{y_c}$, defined as
\begin{equation}
    \mathbb{S}_{y_c} = \{\mathbb{S}_{y_c}^i \mid i \in \mathbb{I}\} := \{\mathcal{U}_{d, y_c} \mid \mathcal{U}_{d,y_c} \neq \phi\}_{d=0}^{d=H}.
\end{equation}
Now, we define the \emph{desired order} $z$ for a class $y_c$ as:
\begin{equation}
    z = [ 0, \underbrace{1, \dots, 1}_{|\mathbb{S}_{y_c}^1|\text{ times}}, \cdots, \underbrace{k, \dots, k}_{|\mathbb{S}_{y_c}^k|\text{ times}} ]
    \label{eqn:pref_order}
\end{equation}
Note that this preference order is conditioned on the true class $y_c$. $\vert \mathbb{S}_{y_c}^i \vert$ denotes the number of classes at rank $i$; therefore, $z$ encodes the properties of the tree via ranks. 

Let $\Pi = [ \pi_{y_1}, \cdots, \pi_{y_K} ]$ be the vector of probabilities corresponding to all the leaf classes and $\hat{\Pi} = \arg \text{sort}(\Pi)$ be the vector of indices (class labels) that would sort $\Pi$ in descending order. Assume $rank(y_c, y_j)$ returns the index $i \in \mathbb{I}$ of the set $\mathcal{U}_{d,y_c}$ that contains the class $y_j$.
Finally, we define the \emph{predicted order} $\widehat{z}$ as follows:
\begin{equation}
    \widehat{z} = [rank(y_c, \hat{\Pi}_j) ]_{j=1}^{K}
\end{equation}
By defining a metric on the \textit{desired} and \textit{predicted} preference orders, $z$ and $\widehat{z}$ respectively, we can derive a hierarchical metric that takes into account the branching factor and depth (or height) of the tree. Specifically, we compute the following for a single sample having the true class $y_c$ as:
\begin{align} \label{eq:hops}
    s = &\sum_{j=1}^{K} \eta_j \cdot |z_j - \widehat{z_j}|
\end{align}
where, $\eta_j$'s are defined using a multi-step exponential-linear decay function. Specifically, we use the exponential decay when the preferred rank changes, i.e., $\eta_j=2^{-z_j}$ when $j$ is the first occurrence of rank $z_j$, and for each subsequent values $j$ for rank=$z_j$, we use a linear decay weight by linearly interpolating between $2^{-{z_j}}$ and $2^{-(z_j+1)}$. An exponential decay ensures that the classes with higher LCA distances, when ranked closer to the predicted class, have a relatively high negative impact on the score; thereby accounting for the imbalance of the tree. Meanwhile, the linear decay accounts for the branching factor by reducing the weights for classes at the same rank as we move away from the correct class. The rationale behind the choice of this decay function is detailed in the supplementary.

Note that we get $s_{min}=0$ when $z=\widehat{z}$, and we use the worst possible predicted order, i.e., the reverse order of $z$, to get the maximum achievable score $s_{max}$. The final measure of HOPS is given by:
\begin{align}
    \text{HOPS}=1 - \frac{s}{s_{max}} ~~~\in [0,1]
\end{align}

\textbf{HOPS@$k$}. 
We are often interested in only the top-$k$ performance of a classifier, particularly, when there are a large number of classes. We define the HOPS@$k$ variants, simply by modifying the notion of the worst achievable score and adapting the weights. For HOPS@$k$, we define the worst ranked vector $z^*_r = [z_{k:-1:1},z_{k+1:1:K}]$ \footnote{In $z_{a:c:b}$, $a$ is the start, $b$ is the stop and $c$ is the step-size} yielding the score as $s^k_{max}$ which is then used to compute HOPS@$k$: 
\begin{equation}
    \text{HOPS}@k = \max\left(0,1-\frac{s^k}{s^k_{max}}\right)
\end{equation}
where $s^k$ is computed using (\ref{eq:hops}), but with $\eta_{j>k}=0$. 
The scores averaged over all samples are used for reporting the classifiers performance. It is worth pointing out that for $k=1$, the HOPS@$k$ is equal to the top-1 accuracy. This is aligned with the design of this metric, where the HOPS@$k$ concerns with the ranking performance of up to the $k^{th}$ ranked prediction, while the HOPS metric concerns the complete ranking over all classes.

\subsection{Example}
As discussed in the previous section, for a given ground-truth class $y_c$, we get a \emph{preference order} based on the hierarchy tree by constructing $z$ as:
\begin{equation}
    z = [0, \underbrace{1, \dots, 1}_{\times n_1}, \cdots, \underbrace{k, \dots, k}_{\times n_{k}}]\quad ,~k\le H
\end{equation}
where, $n_k$ denotes the number of $(k-1)^{th}$-\emph{nearest} higher order cousins of class $y_c$ and $n_0 = 1$ (self). This implies that $z$ is the \textit{desired} preference ordering where all the classes are arranged according to the hierarchy-based preferences such that a lower value of $z_j$ denotes a higher preference (similar to ranks). We then propose to construct $\widehat{z}$ using the predictions such that it contains the preference values of the predicted classes based on $z$. For instance, $z$ and $\widehat{z}$ corresponding to the second row in Fig \ref{fig:hops-analysis} will be:

\begin{align*}
    z &= [0, 1, 1, 1, 2, 2, 3, 3, 3, 3, 3, 3, 3, 4, 4, 4, 4] \\
    \widehat{z} &= [0, 3, 3, 3, 3, 2, 2, 1, 1, 1, 3, 3, 3, 4, 4, 4, 4]
\end{align*}

\section{Additional Discussions} \label{supp:discussion}

\noindent \textbf{Computational Overhead:}
In the worst case, the complexity for a complete $b$-ary tree is $\mathcal{O}(\frac{bK - 1}{b - 1}) \approx \mathcal{O}(K)$. However, in the real world, the complexity is much lower because the hierarchy trees are rarely complete. The values of ($K$, $n$) for all the datasets is given in table Table \ref{tab:dataset_statistics}. Moreover, unlike Flamingo and HAFeat, we do not append dummy nodes to have all the leaf nodes at the same level. Additionally, we do not train separate classifiers for each hierarchical level. Hence, we are resource-efficient when compared to these methods.

\noindent \textbf{Reducing Complexity when Severity of Mistakes is not important:}
The ablation experiments suggested that the level-wise accuracy for all the datasets is maintained even when the subspaces are defined using $V_i = \text{span}(\{e_i\})$, while the severity of mistakes is compromised. This is because all the leaf classes are orthogonal and have no hierarchical awareness. As mentioned earlier, the Hier-COS framework learns a single classifier that is capable of classifying at all levels, we can reduce the complexity of the network if the severity of mistakes is not conccerned. Specifically, for predicting classes at level $l$, we can prune the network by removing all the weights corresponding to $\{\mathcal{E} \setminus \mathcal{E}^{(l)}\}$.

\noindent \textbf{Extending to DAGs and Arbitrary Groups:} 
Previous methods \cite{flamingo_2021_Chang, baseline-hafeat} learn the semantic relationship between the leaf classes by learning a separate classifier for every level in the hierarchy tree using a multi-class classification loss with additional constraints. However, in case the relationship is not represented as a tree \cite{Ridnik_ImageNet21K_pretrain, peng2021rp2klargescaleretailproduct, bai2020products10klargescaleproductrecognition}, these methods can not be used and are non-trivial to extend. HAFrame \cite{baseline-Liang_2023-haframe} fixes the frames based on the LCA distance between leaf nodes. However, in the case of a Directed Acyclic Graph (DAG), there can be multiple lowest common ancestors that makes this formulation ambiguous and non-trivial to apply in such a setting. In contrast, we formulate Hier-COS based on the number of common ancestors between any two leaf nodes, which is unambiguous for DAGs or any arbitrary grouping that defines the hierarchical structure.

\noindent \textbf{Designing Kernel Tricks:}
Kernel methods have been widely used in various domains as they allow embeddings to be learned in a very high dimensional space while actually operating in a lower dimensional feature space. At the core, kernel methods use kernel functions that compute the inner product in a lower-dimensional feature space while implicitly transforming the feature embeddings into a higher-dimensional vector space where patterns become easier to recognize. We re-iterate that Hier-COS is a framework that defines a vector space $V_\mathcal{T}$ which implicitly captures the properties of the hierarchical structure. Therefore, we hypothesize that kernel tricks can be designed for implicitly transforming the feature vectors from $Z$ into $V_\mathcal{T}$ without operating in the $n$-dimensional vector space. This will especially be beneficial when the number of leaf nodes is very high and the given semantic relationship is complex.

\section{Additional Supplementary} \label{supp:supp}
This section summarizes all the supplementary files shared with the main paper. In \textit{\texttt{Results and Analysis.pdf}}, we share pre-computed results of our analysis on CIFAR-100. It includes visualization of hierarchy trees for all the datasets (demonstrating the complexity, imbalance, etc.), error \& metric analysis, visualization of the weights $\mathbf{w}$ used for HOPS and HOPS@$k$ (depicting the impact of proposed weight decay function), and results of FPA computed based on the approach presented by HAFeat. We will release the code upon acceptance.

\end{document}